\documentclass{article} 
\usepackage{iclr2026_conference,times}

\usepackage{amsmath,amsfonts,bm}

\def\eqref#1{equation~\ref{#1}}

\def\1{\bm{1}}

\DeclareMathAlphabet{\mathsfit}{\encodingdefault}{\sfdefault}{m}{sl}
\SetMathAlphabet{\mathsfit}{bold}{\encodingdefault}{\sfdefault}{bx}{n}

\DeclareMathOperator*{\argmax}{arg\,max}
\DeclareMathOperator*{\argmin}{arg\,min}

\usepackage{hyperref}
\usepackage{url}
\usepackage{amsmath}
\usepackage{amssymb}
\usepackage{mathtools}
\usepackage{amsthm}
\usepackage{thmtools}
\usepackage{algorithm} 
\usepackage{algpseudocode} 
\usepackage{tabularray}
\usepackage{xcolor}

\usepackage{booktabs}
\usepackage{subcaption}
\usepackage{wrapfig}
\theoremstyle{plain}
\newtheorem{theorem}{Theorem}[section]

\theoremstyle{definition}
\newtheorem{definition}[theorem]{Definition}
\newtheorem{assumption}[theorem]{Assumption}
\theoremstyle{remark}

\DeclareMathOperator*{\minimize}{\text{minimize}}

\title{Continual Fine-Tuning with Provably Accurate and Parameter-Free Task Retrieval}

\iclrfinalcopy 

\author{Hang Thi-Thuy Le     \\
Washington State University  \\
\texttt{hang.le1@wsu.edu} \\
\And
Long Minh Bui \\
Washington State University  \\
\texttt{trongminhlong.bui@wsu.edu} \\
\And
Minh Hoang \\
Princeton University \\
\texttt{minhhoang@princeton.edu} \\
\And
Trong Nghia Hoang \\
Washington State University \\
\texttt{trongnghia.hoang@wsu.edu}\\
}

\begin{document}

\maketitle

\begin{abstract}
Continual fine-tuning aims to adapt a pre-trained backbone to new tasks sequentially while preserving performance on earlier tasks whose data are no longer available.~Existing approaches fall into two categories which include input- and parameter-adaptation.~Input-adaptation methods rely on retrieving the most relevant prompts at test time, but require continuously learning a retrieval function that is prone to forgetting.~Parameter-adaptation methods instead use a fixed input embedding function to enable retrieval-free prediction and avoid forgetting, but sacrifice representation adaptability.~To combine their best strengths, we propose a new parameter-adaptation method that enables adaptive use of input embeddings during test time with parameter-free retrieval.~We derive task-retrieval error bounds for a clustering-based, parameter-free paradigm, providing theoretical guarantees that link low retrieval error to structural properties of task-specific representation clusters, revealing a fresh insight into how well-organized clustering structure will enable reliable retrieval.~Motivated by this insight, our method is designed with two key components: (i) an adaptive module composition strategy that learns informative task-specific updates to preserve and complement prior knowledge, and (ii) a clustering-based retrieval mechanism that captures distinct representation signatures for each task, enabling adaptive representation use at test time.~Extensive experiments show that these components work synergistically to improve retrieval and predictive performance under large shifts in task semantics.

\end{abstract}

\section{Introduction}
\label{intro}
\vspace{-2mm}
Continual learning (CL) is a classical challenge in machine learning (ML) where a model must learn from a sequence of tasks that arrive one at a time and with their own distinct data distributions.~A key constraint in this setting is that data from previous tasks cannot be retained once new tasks arrive, obscuring the overall view of all task distributions.~This leads to catastrophic forgetting~\citep{vandeven2022incremental, mccloskey1989catastrophic} in which learning new tasks interferes with previous knowledge.~Classical approaches often mitigate forgetting via regularization~\citep{titsias2019functional,pan2020continual}, replay buffers~\citep{buzzega2020dark,cha2021co2l,chaudhry2019tiny}, or pruning~\citep{hung2019compacting,golkar2019continual}.~However, these strategies either update/prune large shared models, or maintain/retrieve from sizable parameter sets or representative samples for each task, which struggle to scale when task sequences become long with increasing shifts in  semantics.

{Against these limitations, the recent rise of large foundation models~\citep{devlin2019bert} has opened up new possibilities for CL designs, enabling a paradigm where a pre-trained backbone is frozen to ensure knowledge retention.~During training, task-specific updates are incorporated via either input-adaptation or parameter-adaptation methods.~To elaborate, input adaptation refers to adaptation methods that append learnable parameters (i.e., prompts) to the tokenized input sequences (e.g., sequences of image patches).~Examples include prompt-tuning methods~\citep{lester2021prompt}.~Parameter adaptation refers to methods that add low-complexity trainable parameter blocks to its pre-trained parameters.~Examples include LoRA-based methods~\citep{hu2021lora}.~Learning these units requires significantly smaller memory footprint and computation cost than maintaining and updating large sets of shared parameters with sophisticated regularization.}


\noindent For example, RanPAC addresses the issue of forgetting via (1)~fine-tuning a low-rank adaptation (LoRA) unit using data from the first task; and (2)~using it to compute input and class embeddings across all tasks~\citep{DBLP:conf/nips/McDonnellGPAH23}.~A parameter-free classification model (e.g., nearest neighbor or linear discriminant analysis) is then employed to map unseen inputs to their most probable classes.~Its use of a pre-tuned set of adaptation parameters (using data from the first task) for inputs across all tasks however \underline{limits its adaptability} in real-world applications involving significant semantic gaps across tasks~\citep{DBLP:conf/icml/KimYPLSBL24} (see Section~\ref{sec:main-results}).

Alternatively, retrieval-based continual fine-tuning frameworks often adopt a two-stage strategy: (1)~learning task-specific fine-tuning parameters to mitigate interference between tasks~\citep{wang2022sprompt,wang2022dualprompt,wang2022learning,wang2023hide,DBLP:conf/nips/McDonnellGPAH23}; and (2)~retrieving the appropriate parameter set at inference time to handle previously unseen inputs.~For instances, L2P maintains a shared pool of prompts to compartmentalize task-specific knowledge and optimizes a prompt-selection mechanism to retrieve relevant prompts for each input~\citep{wang2022learning}.~This leads to several generalizations such as separating global and local prompts~\citep{wang2022dualprompt}, using prompt ensemble to enhance knowledge transfer~\citep{wang2023hide}, or maintaining a growing prompt pool~\citep{Smith_2023_CVPR, wang2023hide,wang2022sprompt}.~While their learnable retrieval mechanisms help mitigate knowledge interference and enable flexible test-time adaptation, these methods retain the \underline{risk of forgetting} because the parameters of their retrieval modules must be continuously updated as new tasks arrive.

{\bf Contribution.}~To alleviate key issues with the lack of \underline{representation adaptability} in parameter-free classification~\citep{DBLP:conf/nips/McDonnellGPAH23} and \underline{forgetting in the retrieval-based methods}, we develop \textsc{Proteus} which is a \underline{pro}vably accura\underline{te}, parameter-free task retrieval framework for continual fine-t\underline{u}ning \underline{s}ystems.~It achieves provably low retrieval error rate and hence superior performance compared to existing state-of-the-arts.~This is enabled by the following contributions:

{\bf 1.~Provably Accurate and Parameter-Free Retrieval.}~We develop a theoretical analysis of retrieval error rates for a broad class of parameter-free, signature-based representation retrieval methods.~Our framework characterizes retrieval error in terms of the clustering structure of the distributions of signature patterns {(see Definition 3.1)} created for different task-specific adaptation modules.~This provides a principled foundation applicable across diverse retrieval-based CL methods.~Concretely, we study algorithms that learn distinct signature patterns from each task’s input embeddings to capture both intra- and inter-task variability.~{These signature patterns serve as keys to retrieve the most relevant representation adaptation modules for each input during test time.}~The retrieval method is parameter-free, and its error rate is provably controlled by statistical properties of the signature clusters and is shown to be vanishingly small (Section~\ref{subsec:error-rate}).

\textbf{2.~Adaptive Knowledge Transfer and Discovery.}~Motivated by the theoretical insight that retrieval error rates are controlled by the clustering properties of signature patterns, we develop an adaptive knowledge transfer and discovery method designed to improve cluster separation {(see Definition 3.3)} and thus reduce retrieval error.~This is achieved via learning new representation components that complement those from previous tasks.~This in turn helps segregate new representation attributes from old ones and creates more distinct clusters of representation signatures for new tasks while preserving knowledge transferability.~As a result, each task-specific adaptation module and its induced input embedding or representation distributions become easier to discriminate and retrieve during test-time inference, ensuring low retrieval error and improving predictive performance (Section~\ref{sec:adaptive_ft}).


{\bf 3.~Empirical Studies.}~We demonstrate empirically that the integrated synergy between our adaptive knowledge transfer and parameter-free retrieval mechanism with provably low error rate significantly enhances the performance and scalability of continual fine-tuning.~As reported in our empirial studies, our proposed method \textsc{Proteus} establishes new SOTA results on a wide variety of prominent class-incremental CL benchmarks, including the highly challenging Visual Task Adaptation Benchmark (VTAB) featuring diverse task sequences large semantic gap~\citep{zhai2019visual}.~It achieves up to 57\% and 30\% gains in retrieval and classification performance while attaining the best top-1 average forgetting metric, which corroborates our theoretical insights (Section~\ref{sec:exprs}).

\begin{figure}[!t]
    \centering
    \includegraphics[width=0.97\linewidth]{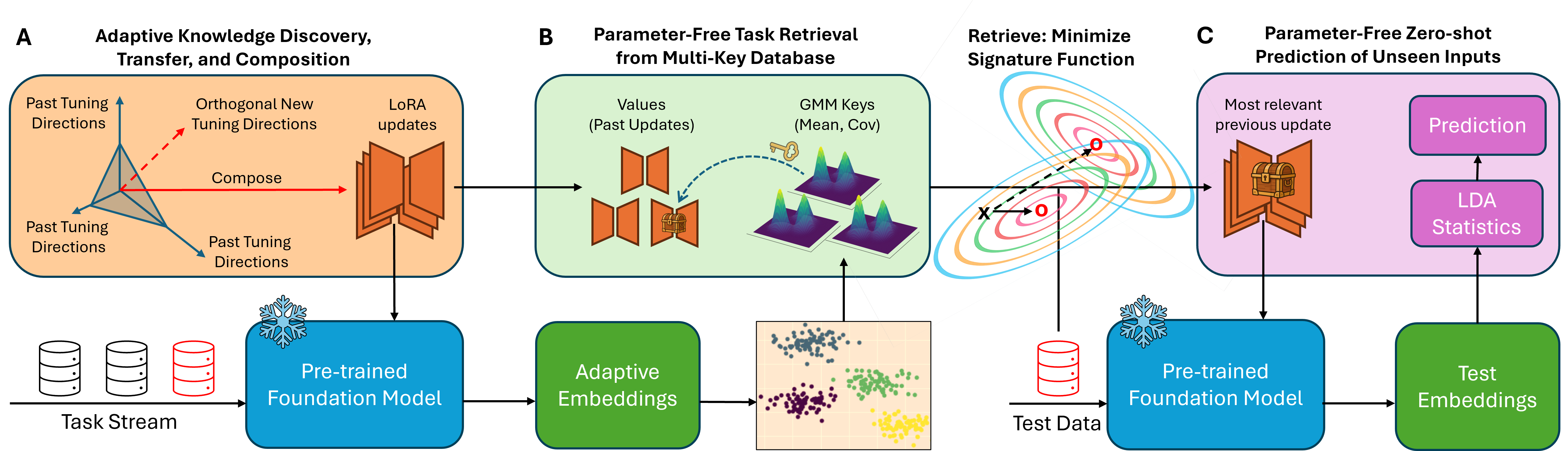}
    \caption{The overall workflow of \textsc{Proteus}.~{\bf A:} A LoRA unit is optimized for each task as a combination of previous tuning directions and new orthogonal components to capture task-specific information (Section~\ref{sec:adaptive_ft}).~{\bf B:} A database of values (past update directions) and multi-keys (GMM parameters fit to each task embedding distribution) is updated.~At test time, this database is used to retrieve most informative past updates per test input (Section~\ref{sec:adaptive_ft}).~{\bf C:} LDA prediction (Section~\ref{sec:adaptive_ft}).}
    \label{fig:Workflow SOLAR}
    \vspace{-5mm}
\end{figure}

\section{Problem Formulation and Existing Literature}
\label{sec:problem}\vspace{-2mm}
{\bf Problem Formulation.}~Continual fine-tuning (CFT) aims to adapt a pre-trained model to a sequence of tasks whose training datasets $ D_1, D_2, \ldots, D_m $ arrive sequentially and cannot be retained due to practical constraints.~The key challenge is how to design resilient adaptation methods that adapt the CFT system to new task data effectively while preventing it from forgetting past solution expertises.~This can be addressed via a retrieval-based approach:~(1) learning task-specific adaptation that encodes solution for each task while minimizing interference with prior tasks; and~(2) learning a retrieval mechanism to select and compose the most relevant expertise for each test input.

\noindent Let $ \Delta\boldsymbol{\omega}_k $ denote the set of adaptation parameters or expertises learned for the $ k $-th task’s dataset $ D_k $, and $ \boldsymbol{M}_{k-1} = (\Delta\boldsymbol{\omega}_1, \Delta\boldsymbol{\omega}_2, \ldots, \Delta\boldsymbol{\omega}_{k-1})$ represent the database of learned expertises with respect to the first $ k - 1$ datasets. Upon the arrival of task $ k $ with dataset $ D_k $, we learn a new set of adaptation parameters $ \Delta\boldsymbol{\omega}_k $ and update the parameter $ \boldsymbol{\theta} $ of a retrieval score function $ \gamma_{\boldsymbol{\theta}}$ via minimizing:
\begin{eqnarray}
\Delta\boldsymbol{\omega}_k, \boldsymbol{\theta} &=& \argmin_{\Delta\boldsymbol{\omega}'_k, \boldsymbol{\theta}'} \Big( L\left(D_k,\boldsymbol{M}'_k\right) \ +\  \lambda_1 \gamma_{\boldsymbol{\theta}'}\left(D_k, \boldsymbol{M}'_k\right) \ +\ \lambda_2 R\left(\boldsymbol{M}'_k\right)\Big) \quad \text{where}\quad \nonumber\\
\boldsymbol{M}'_k &=& \mathrm{retrieve}_{\boldsymbol{\theta}}\big(D_k, \boldsymbol{M}_{k-1}\big) \ \cup\  \Delta\boldsymbol{\omega}_k
\end{eqnarray}
where $ L(D_k,\boldsymbol{M}_k) $ is the task-specific loss (e.g., cross-entropy for classification) incurred on $D_k$ using the expertises in $ \boldsymbol{M}'_k $ which comprises a new (learnable) expertise $\Delta\boldsymbol{\omega}_k$ and those retrieved from $\boldsymbol{M}_{k-1}$ using a retrieval procedure based on $\gamma_{\boldsymbol{\theta}}$ (e.g., top-$\kappa$ highest).~For example, $ \gamma_{\boldsymbol{\theta}} $ can be a learnable neural network~\citep{wang2023hide} or cosine distance~\citep{wang2022dualprompt,wang2022learning}.~The optimization task might also involve a (data-free) regularization term $R(\boldsymbol{M}'_k)$ that constrains the new expertise with respect to the (relevant) retrieved expertises to avoid knowledge interference.~The retrieval and regularization loss terms are also weighted with scalar parameters $\lambda_1$ and $\lambda_2$.~The database $\boldsymbol{M}_k = \boldsymbol{M}_{k-1} \cup \Delta\boldsymbol{\omega}_k$ is then updated with the new expertise $ \Delta\boldsymbol{\omega}_k $.

\textbf{Existing Literature.}~We summarize below two notable approaches in continual fine-tuning (CFT).

\noindent {\bf {1.~Input-Adaptation CFTs.}} {These methods are based on prompt-tuning approaches~\citep{lester2021prompt}, which model $ \boldsymbol{M}_k $ as a set of task-specific prompts or prefix tokens to be appended to the input sequences.~These prompts will be updated selectively based on their estimated relevance to the current task~\citep{wang2022learning}.~This requires continual updates to $ \boldsymbol{\theta} $ to learn a retrieval score function $\gamma_{\boldsymbol{\theta}}$ that retrieves a subset of the most relevant prompts from $ \boldsymbol{M}_k $ for each input $ \boldsymbol{x} $ in the current task.~More recent prompt-based CL methods~\citep{wang2022dualprompt,Smith_2023_CVPR,wang2022sprompt,wang2023hide} have generalized the above approach with different regularization methods and organizations of the prompt parameters, including generating new prompts as the model observes more tasks (see Appendix~\ref{related}).~However, continually updating the prompt \underline{retrieval function retains the risk of forgetting} with diverse task sequences (see Section~\ref{sec:exprs}).}

{\bf {2.~Parameter-Adaptation CFTs.}}~{These methods are based on fine-tuning algorithms that learn low-rank (LoRA) updates $\Delta\boldsymbol{\omega}_k = \boldsymbol{B}\boldsymbol{A}^\top$ which are added to the frozen weights of a pre-trained model~\citep{hu2021lora} to augment its input representation.~The matrices $\boldsymbol{B} \in \mathbb{R}^{p \times r}$ and $\boldsymbol{A} \in \mathbb{R}^{q \times r}$ are learnable and have low rank $r \ll \min(p,q)$.~LoRA is useful for CL because it confines task-specific adaptation to a small set of parameters and anchors prior knowledge in the pre-trained backbone, allowing the pre-trained model to retent its prior knowledge.~For general architectures, a pair of LoRA matrices $\boldsymbol{A,B}$ is learned for each pre-trained weight matrix that needs fine-tuning.~Like prompt-based CL, LoRA-based CL also maintains and grows a pool of adaptation parameters as the model observes more task data~\citep{DBLP:conf/nips/McDonnellGPAH23}.~However, since LoRA parameters do not live in the input embedding space, it is not trivial to define a similarity measure (as was done in prompt-based CL) to facilitate test-time retrieval of most relevant past updates.}

\noindent To sidestep this and mitigate retriever forgetting, RanPAC~\cite{DBLP:conf/nips/McDonnellGPAH23} uses LoRA parameters of the first encountered task $\Delta\boldsymbol{\omega}_1$ to generate input embeddings and update class embeddings for each subsequent task.~This sufficient statistics enable a parameter-free classification scheme for any test input $\boldsymbol{x}_\ast$ using linear discriminant analysis~\citep{DBLP:conf/nips/McDonnellGPAH23}.~Other approaches such as InfLoRA~\citep{DBLP:conf/cvpr/LiangL24} and SD-LoRA~\citep{DBLP:conf/iclr/WuPHW0PMM025} instead use the most recent update to generate embeddings.~However, as the task diversity increases over time, this \underline{lack of test-time representation adaptation} will degrade performance (see Fig.~\ref{fig:accuracy-tasks}, Section~\ref{sec:exprs}).

{\bf {Overview of our proposed approach.}}~To facilitate both test-time adaptation and retriever forgetting mitigation, we develop a theoretical framework to analyze a broad class of parameter-free retrieval methods.~This is done via learning and organizing low-complexity representation signature in a database.~Our framework design is generic to the choice of of the signature function.~As this design is parameter-free, it mitigates retriever forgetting.~To gain insight into the retrieval error rate, we develop theoretical bounds linking it to the clustering structure of the signature patterns in the database.~This is discussed in Section~\ref{sec:method} and exploited to devise a practical algorithm in Section~\ref{sec:adaptive_ft}.\vspace{-4mm}

\section{Parameter-Free Retrieval with Provable Error Rate}\label{sec:method}
\vspace{-2mm}
This section studies a generic class of parameter-free retrieval algorithms for continual fine-tuning based on a concept of retrieval signature.~This is intuitively a statistical descriptor associated with each task-specific adaptation module $\Delta\boldsymbol{\omega}$ (e.g., prompt or LoRA) that allows us to assess its similarity to a given test input $\boldsymbol{x}_\ast$.~This is constructed by noting that each adaptation module $\Delta\boldsymbol{\omega}: \boldsymbol{x} \rightarrow \boldsymbol{h}$ modifies the pre-trained model in a task-specific manner which results in a distinct or signature distribution $D(\boldsymbol{h})$ over input embeddings $\boldsymbol{h}$ that reflects the characteristics of its corresponding task.~The statistics of such signature distribution can then be used as a basis to guide retrieval as detailed in Section~\ref{subsec:param-free}.~The retrieval error rate can then be quantified explicitly based on the statistical properties of these signature distributions as further discussed in Section~\ref{subsec:error-rate}.\vspace{-2mm}

\subsection{Parameter-Free Retrieval Algorithm}
\label{subsec:param-free}\vspace{-1mm}
\noindent {\bf I.~Intuition.}~Suppose an unseen input $\boldsymbol{x}_\ast$ belongs to a data regime of a specific task that is best embedded using a particular representation adaptation module $\Delta\boldsymbol{\omega}$, then its embedding $\boldsymbol{h}_\ast$ under $\Delta\boldsymbol{\omega}$ must have a high likelihood score $D(\boldsymbol{h}_\ast)$ according to $\Delta\boldsymbol{\omega}$'s signature distribution $D(.)$.~Given a database $\mathrm{KB}$ containing (key, value) pairs $(\Delta\boldsymbol{\omega}, D)$, parameter-free retrieval can be achieved via:
\begin{eqnarray}
\Delta\boldsymbol{\omega} &=& \argmax_{\Delta\boldsymbol{\omega}} D\big( \Delta\boldsymbol{\omega}(\boldsymbol{x}_\ast)\big) \quad\text{over}\quad (\Delta\boldsymbol{\omega}, D) \ \in\ \mathrm{KB} \ .\label{eq:retrieval-intuition}    
\end{eqnarray}
The core design of a parameter-free retrieval algorithm therefore centers on the adaptation module and how to organize its induced input embeddings into distinct signature distributions with specific likelihood functions.~In this analysis, we will consider a class of clustering-based retrieval algorithms which are agnostic to the design of the adaptation module as discussed below.

{\bf II.~Design.}~Let $\boldsymbol{x}$ be a test input, and $\boldsymbol{h}_k(\boldsymbol{x}) =h(\boldsymbol{x};  \Delta\boldsymbol{\omega}_k)$ be its induced embedding under $\Delta \boldsymbol{\omega}_k$. A matching signature for $\Delta\boldsymbol{\omega}_k$ could be characterized by a probabilistic model that summarizes its induced embeddings for inputs from $D_k$. One approach is to view the observed input embeddings $\boldsymbol{h}_k(\boldsymbol{x})$ where $\boldsymbol{x} \sim D_k$ as samples from a Gaussian distribution whose parameters, e.g., its mean and covariance matrices, can be learned via Maximum Likelihood Estimation (MLE):
\begin{eqnarray}
\hspace{-0.5mm}
\left(\boldsymbol{m}_k, \boldsymbol{\Lambda}_k\right) 
= 
\argmin_{\boldsymbol{m},\boldsymbol{\Lambda}}  \sum_{\boldsymbol{x} \in D_k} S_k(\boldsymbol{x}; \boldsymbol{m}, \boldsymbol{\Lambda})
\ \triangleq\  \argmin_{\boldsymbol{m},\boldsymbol{\Lambda}}  \sum_{\boldsymbol{x} \in D_k} \left(\boldsymbol{h}_k(\boldsymbol{x}) - \boldsymbol{m}\right)^\top \boldsymbol{\Lambda}^{-1}\left(\boldsymbol{h}_k(\boldsymbol{x}) - \boldsymbol{m}\right). \hspace{-3mm}
\label{eq:retrieval_gaussian}
\end{eqnarray}
The above signature function $S_k(\boldsymbol{x}; \boldsymbol{m}, \boldsymbol{\Lambda})$ is given by the negative log Gaussian likelihood of $\boldsymbol{h}_k(\boldsymbol{x})$ with mean $\boldsymbol{m}$ and covariance $\boldsymbol{\Lambda}$.~{During test time, each input $\boldsymbol{x}_\ast$ will be mapped to a nearest signature (and its corresponding training task ID) via finding $k_\ast = \argmin_k S_k(\boldsymbol{x}_\ast)$.} 


To generalize the above to account for scenarios where the input embedding distribution can have multiple modes (whereas Gaussian is unimodal), we can instead learn a mixture of Gaussian components for the observed input embeddings. The learned mixture of Gaussian components in turn extend the task-specific single-key retrieval signature to a multi-key signature as defined below.

\begin{definition}[Multi-Key Signature]~The multi-key signature of an adaptation module $\Delta\boldsymbol{\omega}_k$ learned for a task $D_k$ is a collection of $\tau$ Gaussian distributions with parameters $(\boldsymbol{m}_k^t, \boldsymbol{\Lambda}_k^t)_{t=1}^\tau$  such that the induced embedding $\boldsymbol{h}_k$ of each input $\boldsymbol{x} \sim D_k$ is distributed by $\mathbb{N}(\boldsymbol{m}_k^t, \boldsymbol{\Lambda}_k^t)$ for some $t\in [\tau]$.~{The parameters $(\boldsymbol{m}_k^t, \boldsymbol{\Lambda}_k^t)_{t=1}^\tau$ are referred to as signature patterns while the corresponding distribution $\mathbb{N}(\boldsymbol{m}_k^t, \boldsymbol{\Lambda}_k^t)$ is referred to as signature distribution.}
\label{def: multi-key}
\end{definition}

\vspace{-2mm}

For each task and its corresponding adaptation module, the multi-key signature in Definition~\ref{def: multi-key} can be learned by replacing the optimization task in Eq.~\ref{eq:retrieval_gaussian} with:\vspace{-2mm}
\begin{eqnarray}
\hspace{-0.25mm}\minimize_{\boldsymbol{m}^{1:\tau}, \boldsymbol{\Lambda}^{1:\tau}, \{\zeta_{it}\}} \quad \sum_{t=1}^\tau\sum_{i=1}^{|D|} \zeta_{it} \cdot S_k\Big(\boldsymbol{x}_i; \boldsymbol{m}^t, \boldsymbol{\Lambda}^t\Big) \quad \text{such that} \quad \zeta_{it} \in \{0, 1\} \quad \text{and} \quad \sum_{t=1}^\tau \zeta_{it} = 1 \ . \label{eq:18}
\end{eqnarray}

\vspace{-2mm}
To avoid fixing the number of components $\tau$, we view the above as a clustering task and instead adopt a generalized non-parametric clustering scheme such as the Dirichlet Process for Gaussian Mixture Models (DP-GMMs)~\citep{teh2010dirichlet}, which automatically learns the best value for $\tau$ in addition to the corresponding distribution parameters for each Gaussian component.~During inference, each test input $\boldsymbol{x}_*$ is again matched to the training task with highest signature score, i.e., $(k_*,t_*)=\arg\min_{k,t}S_k(\boldsymbol{x}_*; \boldsymbol{m}_k^t, \boldsymbol{\Lambda}_k^t)$, which retrieves the specific $\Delta\boldsymbol{\omega}_k$ in our database.

\subsection{Provable Retrieval Error Rate}\label{subsec:error-rate}
This section establishes a direct link between the clustering structure of the aforementioned signature distributions and the retrieval error rate.~As we assign each test input to the closest signature distribution under the corresponding representation adaptation module, retrieval error arises when an input $\boldsymbol{x}$ belongs to a task-specific regime with adaptation module $\Delta\boldsymbol{\omega}_k$ and signature component $\mathbb{N}(\boldsymbol{m}_k^t, \boldsymbol{\Lambda}_k^t)$ but is instead closer to another signature component $\mathbb{N}(\boldsymbol{m}_{i}^{s}, \boldsymbol{\Lambda}_{i}^{s})$ of a different adaptation module $\Delta\boldsymbol{\omega}_i$ learned for another task. Such an error occurs when the two signature components are close to each other. Intuitively, a clustering structure with substantial overlap among clusters will induce high retrieval error, whereas well-separated clusters will reduce error.~We will substantiate this intuition via formal concepts on clustered data distribution (see Assumption~\ref{assumption:3.1}) and cluster separation factor (see Definition~\ref{def:delta}) as defined below.

\begin{assumption}[Clustered Data Distribution] 
\label{assumption:3.1}
Following our representation clustering algorithm in Eq.~\ref{eq:18}, we assume that for each task $k$ with a multi-key signature with $\tau$ components $(\boldsymbol{m}_k^t, \boldsymbol{\Lambda}_k^t)_{t=1}^\tau$, its data distribution $D_k$ can be decomposed into $\tau$ sub-distributions $\{D_k^t\}_{t=1}^\tau$ such that:
\begin{eqnarray}
\hspace{-5mm}\boldsymbol{h}_k(\boldsymbol{x}) \ \sim\ \mathbb{N}\big(\boldsymbol{m}^t_k, \boldsymbol{\Lambda}^t_k\big) \quad\text{when}\quad\boldsymbol{x} \sim D_k^t(\boldsymbol{x})\ . \label{eq:a1}
\end{eqnarray}
This is a reasonable assumption as our clustering algorithm was developed to fit a $\tau$-component mixture of Gaussians $(\boldsymbol{m}_k^t, \boldsymbol{\Lambda}_k^t)_{t=1}^\tau$ to the input embeddings of each task $k$.
\end{assumption}

\begin{definition}[Cluster Separation Factor]\label{def:delta}
The separation factor $\delta$ of a given sub-data distribution $D_{k}^t$ in Assumption~\ref{assumption:3.1} denotes how much the distance between an input $\boldsymbol{x} \sim D_{k}^t$ and a signature component $(\boldsymbol{m}_i^s, \boldsymbol{\Lambda}_i^s)$, measured under its corresponding adaptation module $\boldsymbol{h}_i(\boldsymbol{x}) = h(\boldsymbol{x}; \Delta\boldsymbol{\omega}_i)$, exceeds the embedding dimension $d$. That is,
\begin{eqnarray}
\mathbb{E}_{\boldsymbol{x} \sim D_{k}^t}\Big[\big(\boldsymbol{h}_i(\boldsymbol{x})-\boldsymbol{m}_i^s\big)^\top\big(\boldsymbol{\Lambda}_i^s\big)^{-1}\big(\boldsymbol{h}_i(\boldsymbol{x})-\boldsymbol{m}_i^s\big)\Big] &=& \big(1 + \delta\big)d  \ . \label{eq:delta}
\end{eqnarray}
\end{definition}
We can now bound the retrieval error rate in terms of the cluster separation factor $\delta$ below.

\begin{restatable}{theorem}{RetrievalErrorBound}\textbf{[Bounding Retrieval Error Rate]}
\label{thm:1}
Suppose we are given $n$ training tasks with clustered data distributions $(D_1^t)_{t=1}^\tau, (D_2^t)_{t=1}^\tau, \ldots, (D_n^t)_{t=1}^\tau$.~Let $\boldsymbol{E}_k^t$ denotes the event that a test input $\boldsymbol{x} \sim D_k^t$, where $D_k^t$ is the $t$-th sub-testing data distribution of task $k$, is not matched with the (correct) embedding function $\boldsymbol{h}_k(\boldsymbol{x})$.~Under mild technical condition in~\ref{assumption:1}-\ref{assumption:2}, we have
\begin{eqnarray}
\hspace{-8mm}\mathrm{Pr}\Big(\boldsymbol{E}_k^t\Big) &\leq& \exp\left(-\frac{d\delta^2}{4\delta + 16}\right) \ \ +\ \  (n-1)\tau\ \exp\left(-\frac{(d\delta/2 + \kappa)^2}{2\sigma^2d + (2/3)(d\delta/2 + \kappa)}\right) \ \ \text{with}\ \ \\
\hspace{-8.75mm}\kappa &\triangleq& \min_{(i,s): i \ne k} \log\frac{|\boldsymbol{\Lambda}_i^s|}{|\boldsymbol{\Lambda}_k^t|} \quad \text{and}\quad \delta \ \ \geq\ \  \max\left(0, -\frac{2\kappa}{d}\right) \ .\label{eq:bound}
\end{eqnarray}
This means the error rate decreases exponentially in $\delta d \geq 0$ where $\kappa$ denotes the minimum difference between log-volume of false signature component $\mathbb{N}(\boldsymbol{m}_i^s, \boldsymbol{\Lambda}_i^s)$ and the true signature $\mathbb{N}(\boldsymbol{m}_k^t, \boldsymbol{\Lambda}_k^t)$.
\end{restatable}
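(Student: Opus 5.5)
The plan is to reduce retrieval failure to a union of simple tail events and control each with a concentration inequality. The structure of the bound suggests two such inequalities: a Laurent--Massart type chi-square bound for the true component, and a Bernstein bound for each false component.

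Let $\boldsymbol{x}\sim D_k^t$ and write
\[
S_k^t(\boldsymbol{x})=\bigl(\boldsymbol{h}_k(\boldsymbol{x})-\boldsymbol{m}_k^t\bigr)^\top(\boldsymbol{\Lambda}_k^t)^{-1}\bigl(\boldsymbol{h}_k(\boldsymbol{x})-\boldsymbol{m}_k^t\bigr)+\log|\boldsymbol{\Lambda}_k^t|,
\]
which is the negative log-likelihood up to constants. Define $S_i^s$ analogously, using $\boldsymbol{h}_i$.

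\emph{Step 1: reduce to two kinds of events.} The event $\boldsymbol{E}_k^t$ implies that $S_i^s(\boldsymbol{x})\le S_k^t(\boldsymbol{x})$ for some $(i,s)$ with $i\neq k$. Introduce the threshold $T=(1+\delta/2)d$. Then
\[
\boldsymbol{E}_k^t\subseteq\{Q_k^t\ge T\}\ \cup\ \bigcup_{i\ne k,s}\{Q_i^s\le T-\log(|\boldsymbol{\Lambda}_i^s|/|\boldsymbol{\Lambda}_k^t|)\},
\]
where $Q$ denotes the Mahalanobis quadratic part of the corresponding $S$. Each log-ratio is at least $\kappa$ by definition, so every event in the union is contained in $\{Q_i^s\le (1+\delta/2)d-\kappa\}$. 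A union bound over the at most $(n-1)\tau$ false components then produces the prefactor in the statement.

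\emph{Step 2: the true-component term.} By Assumption~\ref{assumption:3.1}, $Q_k^t\sim\chi^2_d$. The event is $\chi^2_d-d\ge d\delta/2$. I would apply the Bernstein form of the chi-square tail: the centered variable is sub-gamma with variance factor $2d$ and scale $2$, so
\[
\Pr(\chi^2_d-d\ge u)\le\exp\!\left(-\frac{u^2}{4d+4u}\right).
\]
Setting $u=d\delta/2$ gives $\exp\bigl(-\tfrac{d^2\delta^2/4}{4d+2d\delta}\bigr)=\exp\bigl(-\tfrac{d\delta^2}{16+8\delta}\bigr)$. This is weaker than the stated $\exp(-d\delta^2/(4\delta+16))$ by a factor of $2$ in the $\delta$-term of the denominator. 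To recover the stated constant I would use the sharper Laurent--Massart inversion $\Pr(\chi^2_d-d\ge 2\sqrt{dx}+2x)\le e^{-x}$, solve for $x$, and lower-bound it by $d\delta^2/(4\delta+16)$. This is a routine inequality check, with the constants matched at the end.

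\emph{Step 3: the false-component terms.} This is the main obstacle, because $\boldsymbol{h}_i(\boldsymbol{x})$ with $\boldsymbol{x}\sim D_k^t$ has no assumed Gaussian law. The "mild technical conditions" \ref{assumption:1}--\ref{assumption:2} presumably supply exactly what Bernstein's inequality needs. I expect them to give a decomposition $Q_i^s=\sum_{j=1}^d Z_j$ with independent, or at least martingale-difference, summands; a variance proxy $\sigma^2$ per coordinate; and a range bound $|Z_j-\mathbb{E}Z_j|\le 2$, which explains the $(2/3)$ factor since Bernstein uses $b/3$ with $b=2$. By Definition~\ref{def:delta}, $\mathbb{E}Q_i^s=(1+\delta)d$. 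The event $Q_i^s\le(1+\delta/2)d-\kappa$ is therefore a lower deviation of size $d\delta/2+\kappa$, which is nonnegative precisely because $\delta\ge -2\kappa/d$; this is the role of that condition. Bernstein then yields
\[
\exp\!\left(-\frac{(d\delta/2+\kappa)^2}{2\sigma^2 d+(2/3)(d\delta/2+\kappa)}\right).
\]
Here I assume the separation $\delta$ is uniform over all pairs, so each false component is taken to satisfy Definition~\ref{def:delta} with the same $\delta$.

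If the technical assumptions instead state whitened Gaussianity of $\boldsymbol{h}_i(\boldsymbol{x})$, I would diagonalize $Q_i^s$ as a sum of $d$ weighted noncentral chi-squares and verify the Bernstein moment condition directly. That moment verification would then be the delicate part.

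Summing the bounds from Steps 2 and 3 gives the theorem.
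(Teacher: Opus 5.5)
Your Step 1 matches the paper's proof exactly: the log-likelihood comparison giving $Z_k^t>Z_i^s+\log(|\boldsymbol{\Lambda}_i^s|/|\boldsymbol{\Lambda}_k^t|)\ge Z_i^s+\kappa$, the threshold $(1+\delta/2)d$, the union bound over $(n-1)\tau$ false components, a $\chi^2$ right tail for the true component, Bernstein for the false ones, and $\delta\ge-2\kappa/d$ to keep the deviation $d\delta/2+\kappa$ nonnegative. The trouble is that neither tail step closes as you plan.

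In Step 2 the ``routine inequality check'' fails. Inverting Laurent--Massart gives the exponent $u^2/(\sqrt{d+2u}+\sqrt d)^2=u^2/(2d+2u+2\sqrt{d^2+2du})$, which is strictly smaller than $u^2/(4d+2u)$ for every $u>0$. At $u=d\delta/2$ it equals $d\delta^2/(8+4\delta+8\sqrt{1+\delta})<d\delta^2/(16+4\delta)$. No sharper $\chi^2$ inequality can rescue this, because the bound you need, $\Pr(\chi^2_d\ge d+u)\le\exp(-u^2/(4d+2u))$, is false. For $d=20$, $\delta=20$ one has $\Pr(\chi^2_{20}>220)=e^{-110}\sum_{j=0}^{9}110^j/j!\approx e^{-80.4}$, whereas $\exp(-d\delta^2/(4\delta+16))=e^{-83.3}$. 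More generally, for $u=td$ the exact Cram\'er rate $\tfrac d2(t-\log(1+t))$ lies strictly below $\tfrac d2\,t^2/(2+t)$, since $\log(1+t)>2t/(2+t)$ for $t>0$. The paper's proof simply asserts this inequality as ``the right-tail bound of $\chi^2(d)$'', so the stated first term is not established there either. What this route actually yields is your $\exp(-d\delta^2/(16+8\delta))$ (or the Laurent--Massart exponent above), and the first term must be weakened accordingly.

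In Step 3 you are right that Bernstein needs more than a mean and a variance, but Assumption~\ref{assumption:2} does not supply what you hope for. It consists only of a determinant-ratio bound (A2-1), the mean $(1+\delta)d$ (A2-2), and a variance bound $d\sigma^2$ (A2-3, stated for $\boldsymbol{h}_i(\boldsymbol{x})$ and then used in the proof as $\mathbb{V}[Z_i^s]\le\sigma^2 d$). There is no coordinatewise decomposition, no independence, and no range or moment-growth condition. The paper simply substitutes $\mathbb{E}[Z_i^s]$ and $\mathbb{V}[Z_i^s]$ into Bernstein's formula.

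From mean and variance alone one only gets Cantelli's bound $\sigma^2d/(\sigma^2d+a^2)$ for a deviation $a$, and that is essentially sharp here. Take $\tau=1$ and all determinants equal, so $\kappa=0$. Let $Z_i^s=0$ with probability $p=\sigma^2/((1+\delta)^2d+\sigma^2)$ and $Z_i^s=(1+\delta)d/(1-p)$ otherwise. This satisfies A2-1, A2-2 and $\mathbb{V}[Z_i^s]\le\sigma^2 d$. Yet $\boldsymbol{E}_k^t$ occurs whenever $Z_i^s=0$, since $Z_k^t>0$ a.s. Hence $\Pr(\boldsymbol{E}_k^t)\ge p\asymp 1/d$, which eventually exceeds the exponentially small right-hand side.

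So Step 3 requires an explicit additional hypothesis that is absent from the paper. For example, $Z_i^s$ could be a sum of independent terms with $|Z_j-\mathbb{E}Z_j|\le b$, or satisfy a sub-exponential moment condition. One small correction: the factor $2/3$ corresponds to $b=1$ in Bernstein's $\exp(-a^2/(2V+\tfrac23 ba))$; $b=2$ would give $\tfrac43$.
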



\noindent See Appendix~\ref{retrieval-err} for the proof.~The above bound implies that approximately the error rate decreases exponentially fast in $O(\delta d)$ but it also increases linearly in the number of task and representation clusters $O(n\tau)$, i.e., $\mathrm{Pr}(\boldsymbol{E}_k^t) \lesssim O(n\tau)\exp(-O(\delta  d))$.~This intuitively means that a well-designed algorithm is needed to prevent the separation factor $\delta$ from becoming too small, which would otherwise negate the exponential decay in the error rate. Furthermore, we show that, theoretically, if $\delta$ is sufficiently large relative to the number of tasks and representation clusters, the retrieval error can be made arbitrarily small, despite its linear dependence on the number of tasks, shown in Theorem~\ref{thm:2}.

\begin{restatable}{theorem}{MinDelta}\textbf{[Keeping Error Rate Arbitrarily Low]}
\label{thm:2}
For $\epsilon > 0$, if $\delta$ is sufficiently large, $\mathrm{Pr}\big(\boldsymbol{E}_k^t\big) \leq \epsilon$. That is, we can choose $\delta$ as a function of $\epsilon$ to ensure {$\mathrm{Pr}\big(\boldsymbol{E}_k^t\big) \leq \epsilon$}, where:\vspace{-2mm}
\begin{eqnarray}
\hspace{-11mm}\delta &\geq& \frac{2}{d}\max\left\{\frac{1}{4}\left(M + \Big(M^2 - 24d\sigma^2\kappa\Big)^{\frac{1}{2}}\right) - \kappa,\  \log\frac{N}{\epsilon}\left(1 + \left(1 + \frac{4d}{\log\frac{N}{\epsilon}}\right)^{\frac{1}{2}}\right)\right\} \ ,\label{eq:epsrate} 
\end{eqnarray}

\vspace{-2mm}
where $\sigma^2$ is a constant factor computed in~\ref{assumption:2} and $M = 3d\sigma^2 - \kappa$.
\end{restatable}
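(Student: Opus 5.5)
The plan is to split the error budget between the two terms of the bound in Theorem~\ref{thm:1}. Write $L \triangleq \log\frac{N}{\epsilon}$ and take $N \triangleq 1 + (n-1)\tau$, i.e.\ one share for the first term and one share for each of the $(n-1)\tau$ false components. It then suffices to show that each of the two exponentials in Theorem~\ref{thm:1} is at most $\epsilon/N$. If that holds, $\mathrm{Pr}(\boldsymbol{E}_k^t) \le \epsilon/N + (n-1)\tau\,\epsilon/N = \epsilon$. Throughout, I keep the standing condition $\delta \ge \max(0,-2\kappa/d)$ of Theorem~\ref{thm:1}, so that $x \triangleq d\delta/2 + \kappa \ge 0$.

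\emph{First term.} The requirement $\exp\!\big(-d\delta^2/(4\delta+16)\big) \le \epsilon/N$ is equivalent to
\[
d\delta^2 - 4L\delta - 16L \ \ge\ 0 .
\]
This quadratic in $\delta$ has a positive leading coefficient and product of roots $-16L/d<0$. Hence it holds for all $\delta$ at or above its positive root
\[
\frac{4L + \sqrt{16L^2 + 64dL}}{2d} \;=\; \frac{2}{d}\,L\Big(1 + \big(1 + 4d/L\big)^{1/2}\Big),
\]
which is exactly the second entry of the max in Eq.~\ref{eq:epsrate}.

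\emph{Second term.} I will not solve the exponent directly against $L$. Instead I first show that the Bernstein-type exponent dominates a linear function of $\delta$, namely
\[
\frac{x^2}{2\sigma^2 d + \tfrac{2}{3}x} \;\ge\; \frac{x-\kappa}{2} \;=\; \frac{d\delta}{4}.
\]
Since the denominator is positive, multiplying out and collecting terms reduces this to $2x^2 - Mx + 3d\sigma^2\kappa \ge 0$ with $M = 3d\sigma^2 - \kappa$. This holds once $x$ is at least the larger root $\tfrac14\big(M + (M^2 - 24d\sigma^2\kappa)^{1/2}\big)$. If the discriminant is negative, the quadratic is positive everywhere and the condition is vacuous. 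Translating back through $x = d\delta/2 + \kappa$ gives precisely the first entry of the max in Eq.~\ref{eq:epsrate}. Under this condition each term in the sum is at most $\exp(-d\delta/4)$. Since $\big(1+4d/L\big)^{1/2} \ge 1$, the first-term condition already forces $\delta \ge 4L/d$, hence $\exp(-d\delta/4) \le e^{-L} = \epsilon/N$, which closes the argument.

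\emph{Main obstacle.} The expected difficulty is the second term: reverse-engineering the intermediate inequality $\text{exponent} \ge d\delta/4$ so that the quadratic threshold matches the stated $M$ and $24d\sigma^2\kappa$. Care is also needed with the sign of $\kappa$. When $\kappa<0$ the larger root and the constraint $x\ge 0$ must be reconciled with the standing assumption $\delta \ge -2\kappa/d$. When $\kappa > 0$ and the discriminant is negative, the first entry of the max must be read as imposing no constraint. I will handle these cases explicitly before combining the two thresholds.
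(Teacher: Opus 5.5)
Your proposal is correct and follows the paper's proof essentially step for step. You use the same intermediate bound of the Bernstein exponent by $d\delta/4$, which gives the quadratic $2x^2 - Mx + 3d\sigma^2\kappa \ge 0$ in $x = d\delta/2+\kappa$, and the same quadratic $d\delta^2 - 4L\delta - 16L \ge 0$ with $N=(n-1)\tau+1$. The only differences are cosmetic. The paper chains $\exp(-d\delta/4) < \exp(-d\delta^2/(4\delta+16))$ to obtain $\mathrm{Pr}(\boldsymbol{E}_k^t) \le N\exp(-d\delta^2/(4\delta+16))$, whereas you bound each term by $\epsilon/N$ directly via $\delta \ge 4L/d$. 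You are also more explicit than the paper about the sign of $\kappa$ and the negative-discriminant case.
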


\vspace{-2mm}
See proof in Appendix~\ref{retrieval-err}.~Theorem~\ref{thm:2} thus offers a valuable insight to design continual fine-tuning algorithm with parameter-free and provably low retrieval error rate.~The key is to design the adaptation module and clustering structure that induce a large separation factor $\delta$ for each signature cluster.

Although optimizing this explicitly is difficult, it is intuitive that enforcing representation orthogonality across adaptation modules tends to enlarge cluster separation, ensuring low retrieval rate.~We will develop a practical adaptive fine-tuning algorithm to achieve this in Section~\ref{sec:adaptive_ft} and validate its effectiveness via extensive experiments on multiple benchmark in Section~\ref{sec:exprs}. \vspace{-2mm}




\section{Practical Algorithm Design}
\label{sec:adaptive_ft}
This section introduces an adaptive fine-tuning design based on low-rank adaptation (LoRA)~\citep{hu2021lora}.~The key idea is to design a LoRA structure that incorporate knowledge transfer from old tasks while also discover orthogonal knowledge relevant to the new task.~This will help improve the separation across their corresponding signature clusters as envisioned previously.

{\bf 1.~Adaptation Module.}~Let $\boldsymbol{\phi}_{\tau,i}$ denote the $i^{\text{th}}$ new tuning direction for any task $\tau$. We note that each tuning direction $\boldsymbol{\phi}_{\tau,i}$ represents a rank-1 update, and is parameterized as an outer product of two vectors in $\boldsymbol{b}_{\tau,i} \in \mathbb{R}^p$ and $\boldsymbol{a}_{\tau, i} \in \mathbb{R}^q$.~For each new task $D_{k+1}$, we freeze old tuning directions, i.e., $\{\boldsymbol{\phi}_{\tau,i} \mid 1 \leq \tau \leq k$, $1 \leq i \leq r$\},
and subsequently represent its low-rank adaptation $\Delta\boldsymbol{\omega}_{k+1}$ as a (learnable) linear combination of old (frozen) and $r$ new tuning directions $\{\boldsymbol{\phi}_{k+1,i}\}_{i=1}^r$. 
 That is:
\vspace{-1mm}
\begin{eqnarray}
\hspace{-20.5mm}\Delta\boldsymbol{\omega}_{k+1} &\triangleq& \sum_{\tau=1}^k \sum_{i=1}^r s_{\tau,i} \boldsymbol{\phi}_{\tau,i} \ \ +\ \  \sum_{i=1}^r 
\boldsymbol{\phi}_{k+1, i}
\ \ = \ \ \sum_{\tau=1}^k \boldsymbol{B}_\tau\boldsymbol{S}_\tau\boldsymbol{A}^\top_\tau \ + \ \boldsymbol{B}_{k+1}\boldsymbol{A}^\top_{k+1} \ ,
\label{eq:9}
\end{eqnarray}
where $\boldsymbol{A}_\tau \triangleq [\boldsymbol{a}_{\tau,1}, \ldots, \boldsymbol{a}_{\tau,r}]$, $\boldsymbol{B}_\tau \triangleq [\boldsymbol{b}_{\tau,1}, \ldots, \boldsymbol{b}_{\tau,r}]$, and $\boldsymbol{S}_\tau \triangleq \mathrm{diag}[s_{\tau,1}, s_{\tau,2}, \ldots, s_{\tau,r}]$ are diagonal matrices that contain learnable linear coefficients corresponding to old tuning directions.~$\boldsymbol{B}_\tau\boldsymbol{S}_\tau\boldsymbol{A}_\tau^\top$ is therefore the knowledge transfer term which specifies the transfer from a previous task $\tau$ to task $k+1$, by mixing relevant tuning directions according to the signals encoded in $\boldsymbol{S}_\tau$


{\bf 2.~Enforcing Representation Orthogonality.}~Incorporating (frozen) prior tuning directions into the parameterization of $\Delta\boldsymbol{\omega}_{k+1}$ enables knowledge transfer from previous tasks to new task. On the other hand, new tuning directions will be trained to capture unique adaptation patterns that are not relevant to $D_1, D_2, \ldots, D_k$. To ensure these directions contribute genuinely novel information that are specific to $D_{k+1}$, they must be orthogonal to the existing ones. Otherwise, they would lie within the span of previously learned directions and offer no additional value to the model. This gives rise to a set of linear constraints  $\left\langle\Delta\boldsymbol{\omega}_{k+1}^\perp, \Delta\boldsymbol{\omega}_\tau^\perp\right\rangle = 0$\footnote{$\langle\boldsymbol{u},\boldsymbol{v}\rangle \triangleq\ \mathrm{vec}(\boldsymbol{u})^\top\mathrm{vec}(\boldsymbol{v})$.} for all $\tau \leq k$, where $\Delta\boldsymbol{\omega}_{\tau}^\perp\triangleq\boldsymbol{B}_\tau\boldsymbol{A}_\tau^\top$.

We further note that, without any constraint on the knowledge transfer condition, $\boldsymbol{S}_\tau$ could assign non-zero weight to many directions from many past tasks even if they are only weakly relevant. To mitigate this and promote selective transfer, we additionally impose both $\ell_1$ and $\ell_2$-norm penalties than $\boldsymbol{S}_\tau$, which respectively encourage sparsity and low magnitude of the transfer. This forces the model to rely on a small number of past directions, and prevents excessive reliance on any single task. Hence, the continual learning loss at task $k+1$ can be succinctly written as:
\begin{eqnarray}
\hspace{-23.5mm}&\underset{\Delta\boldsymbol{\omega}_{k+1}^\perp, \boldsymbol{S}}{\minimize}&
L\left(D_{k+1}, \sum_{\tau=1}^k \boldsymbol{B}_\tau\boldsymbol{S}_\tau\boldsymbol{A}_\tau^\top +  \Delta\boldsymbol{\omega}_{k+1}^\perp\right) + 
 \lambda \Big( \alpha \cdot \left\|\boldsymbol{S}\right\|_1 + (1 - \alpha) \cdot \left\|\boldsymbol{S}\right\|_2 \Big) \nonumber \\
\hspace{-23.5mm}&\text{subject to}& \forall \tau \leq k : \big\langle\Delta\boldsymbol{\omega}_{k+1}^\perp, \Delta\boldsymbol{\omega}_\tau^\perp\big\rangle = 0 \ , \quad \text{where} \quad \boldsymbol{S} = \mathrm{blkdiag}[\boldsymbol{S}_1,\ldots, \boldsymbol{S}_k] \ .
\label{eq:15_only}
\end{eqnarray}

\vspace{-4mm}
We also performed sensitivity analysis to demonstrate the relative importance of hyper-parameters $\lambda$ (see Section~\ref{sec:ablation}) and $\alpha$ (see Appendix~\ref{sec: alpha-lambda}), and further provide empirical verification that the optimized solutions $\Delta\boldsymbol{\omega}_{\tau}^\perp$  indeed meet the orthogonality constraint (see Appendix~\ref{sec:orthogonal}).

Most importantly, we show that our adaptive fine-tuning design yields sufficiently large cluster separation factors across signature clusters of the learned LoRA modules to guarantee a retrieval error rate below $4\%$.~This is validated by comparing the empirical separation factor $\delta$ (see Eq.~\ref{eq:delta}) with the theoretical $\delta$ required to achieve a target error rate $\epsilon$ (see Eq.~\ref{eq:epsrate}).~For each task, we construct signature distributions from input embeddings induced by LoRA units (see Eq.~\ref{eq:15_only}), compute the empirical $\delta$ (see Eq.~\ref{eq:delta}), and evaluate the empirical retrieval error $\epsilon$ on the test set to obtain the minimum theoretical $\delta$ (see Eq.~\ref{eq:epsrate}).~Across all tasks, the empirical $\delta$ consistently exceeds the theoretical bound, demonstrating that our method enlarges cluster separation enough to ensure low retrieval error as predicted by the analysis in Section~\ref{subsec:error-rate}. This validation is visualized in Figure~\ref{fig:boxplots}.




\begin{algorithm}[t]
   \caption{\textsc{Proteus} Training}
   \label{alg:training_algs}
\begin{algorithmic}[1]
   \State {\bfseries INPUT:} pre-trained backbone $h(\boldsymbol{x}; \boldsymbol{\omega}_o)$ and stream of $m$ datasets $D_1, D_2, \ldots, D_m$
   \State {\bfseries OUTPUT:} database $\mathrm{KB} = \{ \text{multi-key}=(\boldsymbol{\Lambda}^t_\ast, \boldsymbol{m}_\ast^t)_{t=1}^\tau, \text{value} = \Delta\boldsymbol{\omega}_\ast\}$; statistics $\boldsymbol{G}_S$, $(\boldsymbol{e}(c))_c$
   \State initialize $\mathrm{KB} \ \leftarrow\ \emptyset$, $\boldsymbol{G}_S \leftarrow \boldsymbol{0}$, $\boldsymbol{e}_S(c) \leftarrow \boldsymbol{0} \ \forall c$
   \For{$k=1$ {\bfseries to} $m$}
    \State compute $\text{\bf value} \ =\ \Delta\boldsymbol{\omega}_k$ via solving Eq.~\ref{eq:15_only}
    \State compute $\text{\bf multi-key}= (\boldsymbol{\Lambda}_k^t, \boldsymbol{m}_k^t)_{t=1}^\tau$ via Eq.~\ref{eq:18}
    \State update $\mathrm{KB} \ \leftarrow\ \mathrm{KB} \ +\ \{\text{\bf multi-key},\text{\bf value}\}$
 
    \For {$(\boldsymbol{x}, z) \sim D_k$} 
        \State $\boldsymbol{h}_S(\boldsymbol{x}) \triangleq \boldsymbol{h}_k(\boldsymbol{x}) =h(\boldsymbol{x};  \Delta\boldsymbol{\omega}_k)$ -- see {\bf II.~Design} in  Section~\ref{subsec:param-free}
        \State $\boldsymbol{G}_S \leftarrow \boldsymbol{G}_S + \boldsymbol{h}_S(\boldsymbol{x})\boldsymbol{h}_S(\boldsymbol{x})^\top$ 
        \State $\boldsymbol{e}_S(z) \leftarrow \boldsymbol{e}_S(z) + m^{-1}|D_k|^{-1}\boldsymbol{h}_S(\boldsymbol{x})$ 
    \EndFor
   \EndFor
   \State {\bfseries return} $\mathrm{KB}$, $\boldsymbol{G}_S$, $(\boldsymbol{e}_S(c))_{c \ \in\  \mathrm{class}}$
\end{algorithmic}
\end{algorithm}

{\bf 3.~Parameter-Free Prediction.}~For test-time inference, we adopt the parameter-free LDA-based prediction scheme in~\citep{DBLP:conf/nips/McDonnellGPAH23} which is summarized below for clarity.~Formally, let $\boldsymbol{h}_S(\boldsymbol{x}_*)$ is corresponding LoRA-augmented embedding for an unseen input $\boldsymbol{x}_*$ after running retrieval step to obtain $\boldsymbol{h}_S$, the final parameter-free prediction is defined as:\vspace{-2mm}
\begin{eqnarray}
c_\ast &=& \argmax_c\ \  \boldsymbol{h}_S(\boldsymbol{x}_\ast)^\top \Big(\boldsymbol{G}_S + \gamma\boldsymbol{I}\Big)^{-1}\boldsymbol{e}_S(c) \ , \label{eq:23}    
\end{eqnarray}

\vspace{-2mm}
where $\gamma > 0$ is a user-specified noise level; $\boldsymbol{G}_S $ and $\boldsymbol{e}_S(c)$ are LDA statistics (see Algorithm~\ref{alg:training_algs}).~The overview of our~\textsc{PROTEUS} framework is illustrated in Fig.~\ref{fig:Workflow SOLAR}.~Its training pseudocode is provided in Alg.~\ref{alg:training_algs} and its inference pseudocode is deferred to Appendix~\ref{sec:inference-pseudocode}.~We evaluate its (empirical) memory consumption in Fig.~\ref{fig:complexity} and analyze its (theoretical) memory complexity in Appendix~\ref{sec: memory}.\vspace{-4mm}

\section{Empirical Studies} 
\vspace{-2mm}
\label{sec:exprs}

\subsection{Experiment Setup}
\label{sec:setup}
{\bf 1.~Datasets.}~To demonstrate the robustness of \textsc{Proteus}, we compare its performance with state-of-the-art CL baselines using pre-trained models across diverse benchmarks with varying task semantic gaps. We follow the setup in \citep{DBLP:conf/icml/KimYPLSBL24}, simulating $3$ gap scenarios, as described below.\vspace{-1mm}


$\blacktriangleright$ {\bf Uniformly Mild.}~We use the CIFAR-100~\citep{krizhevsky2009learning} ($100$ classes, $600$ images each) and ImageNet-R~\citep{hendrycks2021many} ($200$ classes, $100$ images each) datasets.~We partition ImageNet-R and CIFAR-100 into $10$ task subsets with $20$ and $10$ classes each, respectively.\vspace{-1mm}

$\blacktriangleright$ {\bf Uniformly Abrupt.}~We experiment with the ImageNet-A dataset~\citep{hendrycks2021natural} and two versions of the Visual Task Adaptation Benchmark (VTAB)~\citep{zhai2019visual}, VTAB5T-Small and VTAB5T-Large.~ImageNet-A comprises examples from $200$ ImageNet classes that are commonly misclassified by a ResNet model trained on ImageNet.~We uniformly split this dataset into $10$ task subsets with $20$ classes each.~As the misclassification patterns across those examples are diverse, this simulation results in tasks with abrupt changes in semantic gap.~The VTAB5T-Large dataset~\citep{zhai2019visual} was created via sampling $5$ most semantically distinct task subsets from the original VTAB19T dataset which contains $19$ distinct visual datasets.~The VTAB5T-Small dataset is a downsampled version of VTAB5T-Large~\citep{zhou2023revisiting}. \vspace{-1mm}

$\blacktriangleright$ {\bf Varying.}~We follow the setup in~\citep{DBLP:conf/icml/KimYPLSBL24} to construct VTAB-Sim50 from VTAB by selecting $4$ most distinct tasks and re-partitioning their data into new task subsets with $50\%$ overlap. \vspace{-2mm}

{\bf 2.~Baselines and Metrics.}~We compare \textsc{Proteus} against prompt-based CL methods such as L2P~\citep{wang2022learning}, AdaPromptCL~\citep{DBLP:conf/icml/KimYPLSBL24}, DualPrompt~\citep{wang2022dualprompt}, CODA-Prompt~\citep{Smith_2023_CVPR}, SPrompt~\citep{wang2022sprompt}, and HiDe-Prompt~\citep{wang2023hide}; and LoRA-based CL methods such as InfLoRA~\citep{DBLP:conf/cvpr/LiangL24}, SD-LoRA~\citep{DBLP:conf/iclr/WuPHW0PMM025}, and RanPAC~\citep{DBLP:conf/nips/McDonnellGPAH23}. We measure and report the \textit{average accuracy} (higher is better) $\mathrm{ACC}_\kappa = (1/\kappa) \sum_{\tau=1}^\kappa \mathrm{acc}(\tau,\kappa)$ where $\mathrm{acc}(\tau,\kappa)$ denotes model's prediction accuracy on task $\tau$ after learning task $\kappa$.~See Appendix~\ref{sec: data and hyper} for additional metrics and experiment details.\vspace{-5mm}



\begin{table*}[hb]
\begin{center}
\hspace{1mm}
\begin{small}
\caption{
\label{tab: main-table}~Final average accuracy achieved by \textsc{Proteus} and other baselines across three semantic gap scenarios with diverse benchmark datasets. HiDE-Prompt appears to throw OOM issues or crash on the large scale datasets VTAB5T-large and VTAB5T-Sim50. The best and second-best values are highlighted in red and blue colors, respectively.}
\vspace{1mm}
\setlength{\tabcolsep}{3pt}
\hspace{-3.5mm}\begin{tabular}{|lcc ccc c|}
    \toprule
    \textbf{Methods} & \multicolumn{2}{c}{\textbf{Uniformly Mild}} & \multicolumn{3}{c}{\textbf{Uniformly Abrupt}} & \textbf{Varying} \\ 
    \cmidrule(l){2-3} \cmidrule(l){4-6} \cmidrule(l){7-7}
    & CIFAR-100 & ImageNet-R & ImageNet-A & VTAB5T-large & VTAB5T-small & VTAB-Sim50 \\ 
    \midrule
    L2P & $84.59_{\pm0.44}$ & $61.48_{\pm0.21}$ & $45.36_{\pm1.66}$ & $60.46_{\pm1.24}$ & $65.32_{\pm5.14}$ & $80.24_{\pm1.46}$ \\
    DualPrompt & $85.26_{\pm0.26}$ & $69.38_{\pm0.26}$ & $49.37_{\pm 0.79}$ & $71.84_{\pm0.37}$ & $75.30_{\pm1.57}$ & $87.74_{\pm0.33}$ \\
    CODAPrompt & $84.24_{\pm3.11}$ & $75.39_{\pm0.48}$ & $52.86_{\pm 0.41}$ & $25.57_{\pm1.14}$ & $73.52_{\pm0.27}$ & $86.01_{\pm1.91}$ \\
    SPrompt & $87.04_{\pm0.34}$ & $67.71_{\pm0.73}$ & $40.49_{\pm0.26}$ & $77.37_{\pm0.67}$ & $85.64_{\pm0.28}$ & $90.69_{\pm0.22}$ \\
    AdaPromptCL &  $79.87_{\pm2.10}$& $65.63_{\pm0.78}$ & $48.67_{\pm 0.80}$ & $65.48_{\pm 1.83}$ & $71.86_{\pm 2.86}$  & $77.86_{\pm1.62}$ \\
    HiDE-Prompt & \textcolor{blue}{$92.79_{\pm0.94}$} & $73.00_{\pm0.48}$ & $40.81_{\pm0.74}$ & OOM & $68.44_{\pm 1.14}$ & OOM \\
    \midrule
    InfLoRA & $86.45_{\pm0.21}$ & $74.55_{\pm0.50}$ & $47.88_{\pm 0.71}$ & $32.58_{\pm 1.81}$ & $90.55_{\pm 0.88}$ & \textcolor{blue}{$91.25_{\pm0.25}$} \\
    SD-LoRA & $87.08_{\pm0.44}$ & $75.81_{\pm 0.67}$ & $52.03_{\pm 0.64}$ & $26.13_{\pm5.59}$ & $91.02_{\pm 0.76}$ & $84.91_{\pm1.95}$ \\
    RanPAC & $92.01_{\pm0.19}$ & \textcolor{blue}{$78.08_{\pm 0.30}$} & \textcolor{blue}{$62.76_{\pm 1.09}$} & \textcolor{blue}{$85.84_{\pm 0.24}$} & \textcolor{blue}{$92.34_{\pm0.13}$} & $90.22_{\pm0.24}$ \\
    \midrule
    \textsc{Proteus} & $\textcolor{red}{94.10_{\pm0.64}}$ & $\textcolor{red}{82.17_{\pm0.54}}$ & $\textcolor{red}{63.19_{\pm0.82}}$ & $\textcolor{red}{89.37_{\pm1.94}}$ & $\textcolor{red}{94.24_{\pm1.73}}$ & $\textcolor{red}{95.89_{\pm0.80}}$ \\
    \bottomrule
    \end{tabular}
\end{small}
\end{center}
\end{table*}

\vspace{-6mm}
\subsection{Main Results} 
\label{sec:main-results}\vspace{-2mm}

Table~\ref{tab: main-table} shows that \textsc{Proteus} consistently outperforms all baselines across all semantic gap scenarios.~In the \textit{uniformly mild} scenario, \textsc{Proteus} significantly improves over prompt-based CL baselines up to $14.23\%$ on CIFAR-100, and up to $20.69\%$ on ImageNet-R.~\textsc{Proteus} outperforms RanPAC~\citep{DBLP:conf/nips/McDonnellGPAH23} (current SOTA on LoRA-based CL) on both CIFAR-100 and ImageNet-R by substantial margins of $2.09\%$ and $4.09\%$, respectively. In the more challenging \textit{uniformly abrupt} scenario, our method still outperforms RanPAC.

Against the best prompt-based CL methods across all datasets (ImageNet-A, VTAB5T-Large, and VTAB5T-Small), \textsc{Proteus} consistently achieves $10\%$ better performances.~Finally, in the \textit{varying} scenario, \textsc{Proteus} surpasses the best two baselines, InfLoRA and S-Prompt, by a substantial margin of $5\%$.~We observe that LoRA-based CL methods (RanPAC and \textsc{Proteus}) generally outperforms prompt-based CL.~This is reasonable given LoRA-based methods have more control over model adaptation as their LoRA units adapt pre-trained weights directly rather than indirectly via adding prompts to input sequences.~Additional results on forgetting metric are also reported (see Appendix~\ref{sec:forgetting}), showing that \textsc{Proteus} attains the best top-1 average forgetting metric.

\begin{figure}[t!]
  \centering
  \begin{minipage}{0.48\textwidth}
    \centering
    \hspace{-5mm}\includegraphics[width=0.9\linewidth]{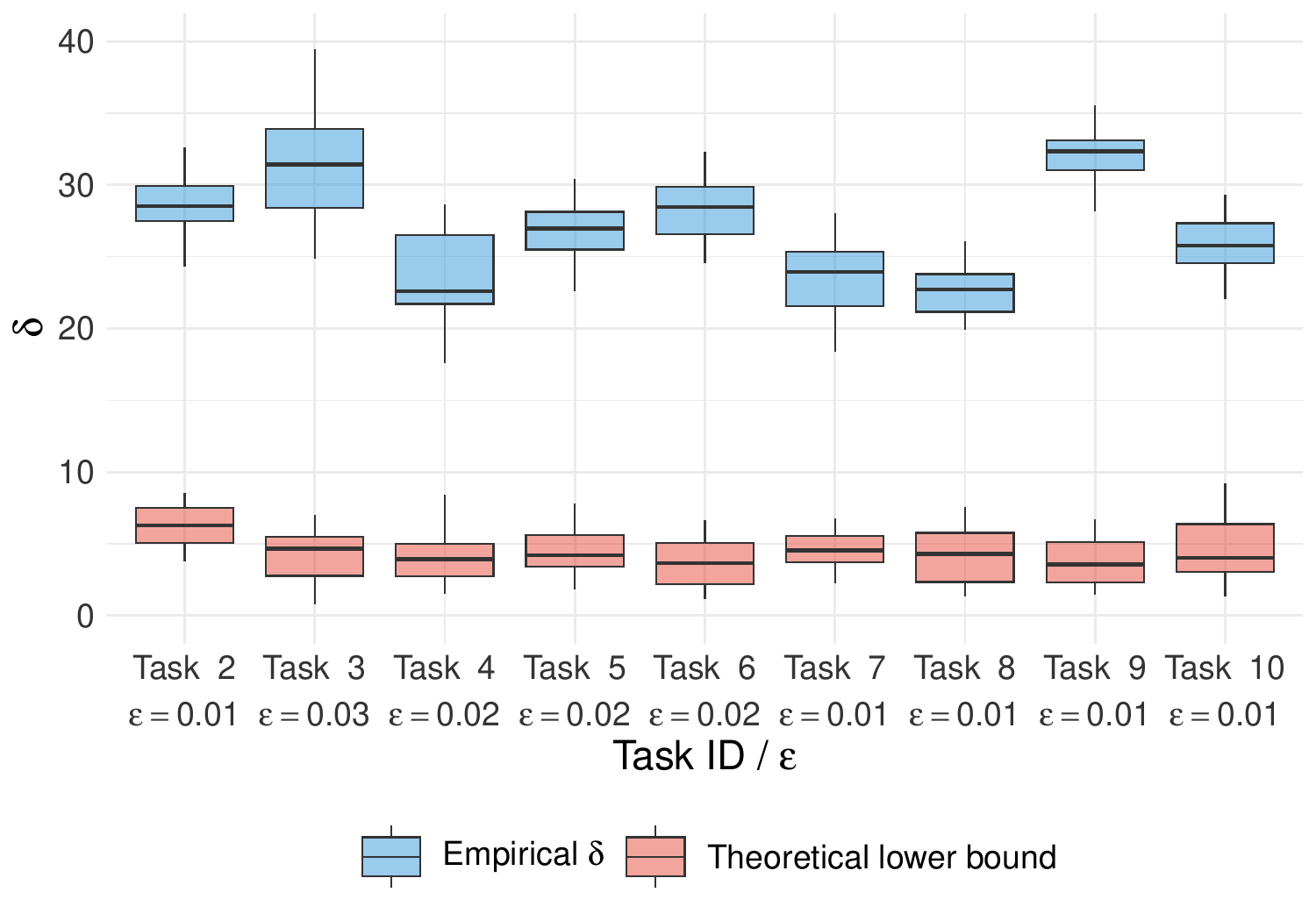}
    \caption{Box plots showing the distribution of our empirical cluster separation factor $\delta$ clearly lies above the distribution of its minimum requirement to guarantee low retrieval error rate $\epsilon$ across tasks in the Split CIFAR-100 benchmark.}
    \label{fig:boxplots}
  \end{minipage}
  \hfill
  \begin{minipage}{0.49\textwidth}
    \centering
    \includegraphics[width=\linewidth]{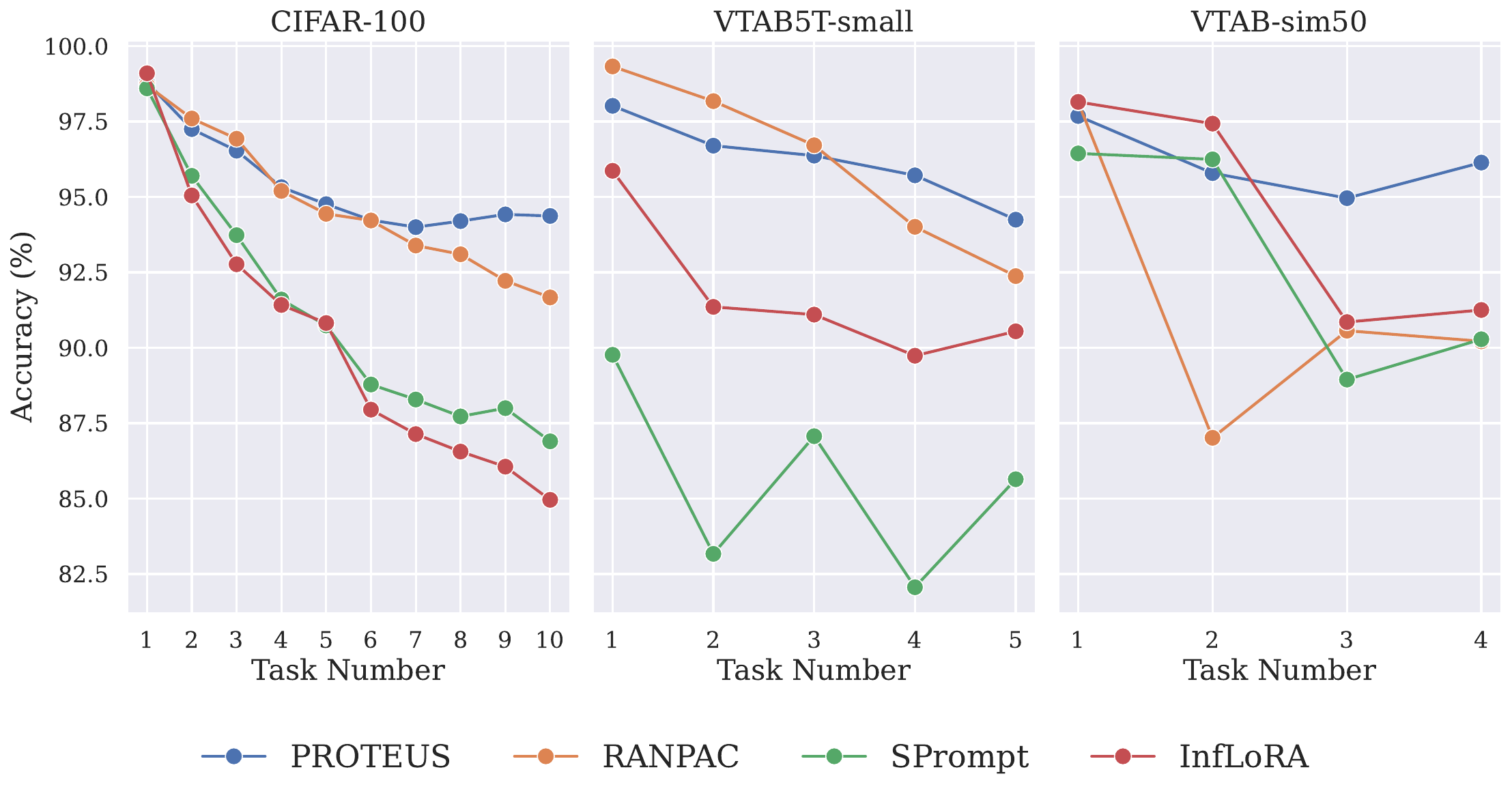}\vspace{-1mm}
    \caption{\textsc{Proteus} has the most stable performance and highest final accuracies on $3$ datasets: CIFAR-100, VTAB5T-small and VTAB-sim50.}
    \label{fig:accuracy-tasks}
  \end{minipage}
\end{figure}

Fig.~\ref{fig:accuracy-tasks} shows the average accuracy of \textsc{Proteus} and other baselines across $10$ CL iterations on three datasets. In the \textit{uniformly abrupt} setting (VTAB-5T-Small), both \textsc{Proteus} and RanPAC outperform other continual learning methods across most iterations, demonstrating consistently strong performances. In the \textit{uniformly mild} (CIFAR-100) and \textit{varying} (VTAB-Sim50) settings, both InfLoRA and RanPAC achieve slightly higher accuracy on early tasks but experience a rapid decline in subsequent tasks, resulting in lower overall accuracy compared to \textsc{Proteus}. This behavior is expected since RanPAC only uses the first learned LoRA unit to perform prediction, and thus struggles to adapt as the model observes more tasks. On the other hand, InfLoRA always uses the most recent update, losing task-specific optimal embeddings acquired earlier on during training. In contrast, \textsc{Proteus}'s adaptive fine-tuning component demonstrates a robust ability to safeguard against task interference, which leads to superior performance. Prompt-based CL methods again show a sharper performance decline and lower retrieval accuracy (Fig.~\ref{fig:accuracy-tasks}, Table~\ref{matching_comparison}), highlighting their susceptibility to forgetting due to parameterized retrieval, especially with large task semantic gaps.\vspace{-2mm}


\begin{table*}[t]
    \centering
    \makebox[\textwidth][c]{
        \begin{small}
        \begin{minipage}{0.41\textwidth}
            \centering
            \setlength{\tabcolsep}{1pt}
            \caption{Performance achieved by \textsc{Proteus} with and without retrieval across various scenarios and datasets.}\vspace{-2mm}
            \begin{tabular}{|lcc|}
                \toprule
                \textbf{CL Datasets} &  \textbf{w/ Retrieval} & \textbf{w/o Retrieval} \\
                \midrule
                CIFAR-100 &  $\textbf{94.10}$ &${76.96}$\\
                ImageNet-R & $\textbf{82.17}$ &  ${57.07}$\\
                \midrule
                ImageNet-A & $\textbf{63.19}$ & $ {35.03}$\\
                VTAB5T-large & $\textbf{89.37}$ & ${70.07}$\\
                VTAB5T-small  & $\textbf{94.24}$ & ${61.65}$\\
                \midrule
                VTAB-Sim50 & $\textbf{95.89}$ & ${78.90}$ \\
                \bottomrule
                \end{tabular}
                
                \label{wo_matching}
        \end{minipage}\vspace{-5mm}
        \hspace{10mm}
        \begin{minipage}{0.51\textwidth}
            \centering
            \setlength{\tabcolsep}{1pt}
            \caption{Retrieval accuracy achieved by retrieval-based CL methods on ImageNet-R, VTAB5T-small and VTAB5T-Sim50.~Retrieval accuracy is defined as the percentage of instances where the correct task identity is successfully retrieved.} \vspace{-2mm}
            \hspace{-4mm}\begin{tabular}{|lccc|}
                \toprule
                \textbf{Methods} & \textbf{ImageNet-R} & \textbf{VTAB-small} & \textbf{VTAB-Sim50} \\
                \midrule
                D-Prompt & 38.59 & 61.35 & 82.40 \\
                SPrompt & 42.52 & 77.26 & 90.43\\
                HiDE & 47.23 & 69.87 & OOM\\
                \midrule
                \textsc{Proteus} & $\textbf{96.02}$ & $\textbf{97.97}$ &  $\textbf{97.16}$\\
                \bottomrule
            \end{tabular}
            
            \label{matching_comparison}
        \end{minipage}\vspace{-5mm}
    \end{small}
    }
\end{table*}

\subsection{Ablation Studies}
\label{sec:ablation}


{\bf Impact of task retrieval on performance.} We compare \textsc{Proteus} with a variant that uses the LoRA unit $\Delta\boldsymbol{\omega}_m$ of the last task $D_m$ which accummulates knowledge transfer from all previous tasks. 
We report this result in Table~\ref{wo_matching}, which shows a significant performance drop ($17\%$-$32\%$) when task retrieval is removed from \textsc{Proteus}.~The largest drop occurs on the uniformly abrupt scenario. This underscores the importance of task retrieval.~Additionally, compared to the best baselines among retrieval-based methods with different task retrieval designs, \textsc{Proteus} achieves up to $58\%$ improvement in retrieval accuracy (see Table~\ref{matching_comparison}), hence corroborating our previous observation.

{\bf Adaptive Fine-Tuning Enhances Retrieval.}~To demonstrate the synergies between the adaptive fine-tuning and task retrieval components in Section ~\ref{sec:adaptive_ft}, we compare \textsc{Proteus} (with learned $\boldsymbol{S}_\tau = \boldsymbol{S}_\ast)$ against a variant that fixes $\boldsymbol{S}_\tau = 0$ for all task $\tau$, and drops the constraint in Eq.~\ref{eq:15_only} (i.e., no knowledge transfer). The result in Fig.~\ref{fig:synergy-pooling-matching} reveals that without knowledge transfer, the retrieval accuracy drops significantly and consistently across various task scenarios. This is expected since the LoRA units learned without regard to previous experience might overlap in their tuning directions, and thus encodes less useful information patterns.~Table~\ref{tab:ablationS} compares these baselines against the heuristic scenario where all previous tasks contribute equally to learning the current one ($\boldsymbol{S}_\tau = \boldsymbol{I}$), showing an even worse performance degradation if we rely excessively on past knowledge.

Additional ablation can be found in Appendix \ref{sec:orthogonal}, where we validate the orthogonality of LoRA units, visualize the sparsity of the selection matrix and study the effect of selecting Top-K Gaussians.\vspace{-4mm}

\section{Conclusion}
\vspace{-2mm}
Our work provides a fresh perspective on continual fine-tuning by establishing a principled connection between task-retrieval error and the clustering properties on a space of representation signature. We develop a new theoretical framework with rigorous guarantees that characterize when retrieval can remain reliable under evolving task distributions.~This is comprehensively validated by our extensive experiments across diverse benchmarks, which demonstrate that the proposed framework not only improves retrieval accuracy but also achieves strong predictive performance while mitigating forgetting.~These results show that our approach effectively unifies the strengths of input- and parameter-adaptation while avoiding their respective limitations.~Integrating theoretical rigor with comprehensive empirical evidence, this work advances a new paradigm for continual fine-tuning that emphasizes parameter-free retrieval grounded in structural representation properties.~We believe this opens a promising direction for building provably effective continual learning systems.




\bibliography{iclr2026_conference}
\bibliographystyle{iclr2026_conference}

\newpage

\appendix
\section*{Appendix}
\section{Bounding the Task-Retrieval Error Rate} \label{retrieval-err}
This section provides a theoretical analysis regarding the error rate of our (parameter-free) task-retrieval procedure in Section~\ref{subsec:param-free} under mild assumptions on the learned clustering of the learned representation across tasks.~This is formalized below.

\subsection{Lists of Assumptions and Laymen Explanations}
\begin{assumption}[Clustered Data Distribution] 
\label{assumption:1}
Following our representation clustering algorithm in Eq.~\ref{eq:18}, we assume that for each task $k$ with a multi-key signature with $\tau$ components $(\boldsymbol{m}_k^t, \boldsymbol{\Lambda}_k^t)_{t=1}^\tau$, its data distribution $D_k$ can be decomposed into $\tau$ sub-distributions $\{D_k^t\}_{t=1}^\tau$ such that:
\begin{eqnarray}
\hspace{-5mm}\boldsymbol{h}_k(\boldsymbol{x}) \ \sim\ \mathbb{N}\big(\boldsymbol{m}^t_k, \boldsymbol{\Lambda}^t_k\big) \quad\text{when}\quad\boldsymbol{x} \sim D_k^t(\boldsymbol{x})\ . \label{eq:a1}
\end{eqnarray}
This is a reasonable assumption as our clustering algorithm was developed to fit a $\tau$-component mixture of Gaussians $(\boldsymbol{m}_k^t, \boldsymbol{\Lambda}_k^t)_{t=1}^\tau$ to the input embeddings of each task $k$.
\end{assumption}

\begin{assumption}[Cluster Separation] \label{assumption:2}
We also assume the following mild regularity condition between the representation clusters:



{\bf A2-1.} For a representation cluster $(\boldsymbol{m}_i^s, \boldsymbol{\Lambda}_i^s)$, the largest (geometric) spread of its uncertainty volume is not exponentially larger, relative to the representation dimension $d$, than that of $k$-th task's representation cluster $(\boldsymbol{m}_k^t, \boldsymbol{\Lambda}_k^t)$. That is,
\begin{eqnarray}
    \max_{(i,s): i \ne k}\left|\boldsymbol{\Lambda}_i^s \right|  &\leq& e^{d\sigma^2 }\left|\boldsymbol{\Lambda}_k^t\right| \quad  
\end{eqnarray}

{\bf A2-2.}~The average distance between an input sampled from $D_k^t$ and the mean of a different representation cluster $(\boldsymbol{m}_i^s, \boldsymbol{\Lambda}_i^s)$, measured under the representation function $\boldsymbol{h}_i(\boldsymbol{x})$ associated with that cluster, exceeds $d$ by a multiplicative or {\em separation factor} $1 + \delta$. That is,
\begin{eqnarray}
\mathbb{E}_{\boldsymbol{x} \sim D_k^t}\left[\big(\boldsymbol{h}_i(\boldsymbol{x})-\boldsymbol{m}_i^s\big)^\top\big(\boldsymbol{\Lambda}_i^s\big)^{-1}\big(\boldsymbol{h}_i(\boldsymbol{x})-\boldsymbol{m}_i^s\big)\right] &=& \big(1 + \delta\big)d  \ . \label{eq:spread}
\end{eqnarray}

{\bf A2-3.}~The representation variance of an input $\boldsymbol{x}$ sampled from $D_k^t$, measured under the representation function $\boldsymbol{h}_i(\boldsymbol{x})\in \mathbb{R}^d$ of another task $i \ne k$ is bounded by $O(d)$. That is, $\mathbb{V}_{\boldsymbol{x} 
\in D_k^t}[\boldsymbol{h}_i(\boldsymbol{x})] \leq d\sigma^2$.
\end{assumption}

{\bf Intuition.}~All these assumptions are reasonable regularity conditions meant to {\em exclude cases with irregular representation clustering structures}, such as exponentially differences in uncertainty volumes across clusters ({\bf A2-1}), irregularly close to incorrect cluster assignment ({\bf A2-2}), excessively large representation variance ({\bf A2-3}), which are unlikely to occur under DP-GMM.

\subsection{Theoretical Analysis}
Under these assumptions, we will establish the following key results, which are first stated in high-level language and then proved mathematically.

{\bf I.~High-Level Results.~}Our key results are two-fold:

{\bf R1.}~The retrieval error rate of our algorithm (PROTEUS) during inference decays exponentially fast at a rate measured by a rational function of the input dimension $d$,  separation factor $\delta$, and variance factor $\sigma^2$.~It however grows linearly in the number of tasks (Theorem~\ref{thm:1}).

{\bf R2.}~It is theoretically possible to offset this linear growth and drive the error arbitrarily close to zero by making the separation factor sufficiently large (Theorem~\ref{thm:2}). 

Intuitively, this is achieved by enforcing orthogonality among fine-tuning directions and non-parametrically increasing the number of clusters when appropriate, which together minimize cluster overlap (i.e., increase separation) and reduce both cluster spread and representation variance. 

{\bf Empirical Verification.}~This is empirically validated in Fig.~\ref{fig:synergy-pooling-matching}, which shows: (1) near-optimal retrieval accuracy when adaptive fine-tuning enforces orthogonality, even as the number of tasks increases; and (2) a sharp degradation in retrieval accuracy when such adaptive tuning is absent.



{\bf II.~Technical Proofs.~}We now substantiate the above high-level results with mathematical proofs.

\RetrievalErrorBound*


{\bf Proof.}~To avoid cluttering the notations, we assume the same number $\tau$ of representation clusters per task in what follows.~The proof can be trivially extended to cases with different $\tau$ per task.

Under Assumption~\ref{assumption:1}, the embedding function $h_k(\boldsymbol{x})$ transforms each sub-distribution $D_k^t$ of task $k$ into a Gaussian $\mathbb{N}(\boldsymbol{m}_k^t, \boldsymbol{\Lambda}_k^t)$.~Hence, when $\boldsymbol{E}_k^t$ occurs, there must exist $(i, s): i \ne k$ for which, 
\begin{eqnarray}
\hspace{-15mm}\log \mathbb{N}(h_i(\boldsymbol{x}) \mid \boldsymbol{m}_i^s, \boldsymbol{\Lambda}_i^s) &>& \log \mathbb{N}(h_k(\boldsymbol{x}) \mid \boldsymbol{m}_k^t, \boldsymbol{\Lambda}_k^t) \ \ \Rightarrow \ \ Z_k^t \ \ >\ \ Z_i^s \ \ +\ \ \log\frac{\left|\boldsymbol{\Lambda}_i^s\right|}{\left|\boldsymbol{\Lambda}_k^t\right|} \ .\label{eq:a2}
\end{eqnarray}
where $Z_k^t = (h_k(\boldsymbol{x}) - \boldsymbol{m}_k^t)^\top(\boldsymbol{\Lambda}_k^t)^{-1}(h_k(\boldsymbol{x}) - \boldsymbol{m}_k^t)$, $Z_i^s = (h_i(\boldsymbol{x}) - \boldsymbol{m}_i^s)^\top(\boldsymbol{\Lambda}_i^s)^{-1}(h_i(\boldsymbol{x}) - \boldsymbol{m}_i^s)$.


This consequently implies
\begin{eqnarray}
\boldsymbol{E}^t_k & \Rightarrow&  Z_k^t \ \ >\ \ \min_{(i,s): i \ne k} \left(Z_i^s \ \ +\ \ \log\frac{\left|\boldsymbol{\Lambda}_i^s\right|}{\left|\boldsymbol{\Lambda}_k^t\right|}\right) \ ,\\
&\Rightarrow& Z_k^t \ \ >\ \ \min_{(i,s): i \ne k} \big(Z_i^s \ \ +\ \ \kappa\big) \ \ \equiv\ \ \boldsymbol{Q}_k^t \ . \label{eq:a3}
\end{eqnarray}
Hence, $P(\boldsymbol{E}^t_k) \leq P(\boldsymbol{Q}^t_k)$.~Following this, we further note that for any choice of $\theta$,
\begin{eqnarray}
\mathrm{Pr}\Big(\boldsymbol{E}_k^t\Big) &\leq& \mathrm{Pr}\Big(\boldsymbol{Q}_k^t\Big) \ \ \leq\ \  \mathrm{Pr}\Big(Z_k^t > \theta\Big) \ +\ \sum_{i\ne k}\sum_{s=1}^\tau \mathrm{Pr}\Big(Z_i^s < \theta -\kappa\Big) \ ,    \label{eq:a4}
\end{eqnarray}
which holds due to the union bound.~Now, by definition, $Z_k^t$ is a chi-square random variable with $d$ degrees of freedom. That is, $Z_k^t \sim \chi^2(d)$.~Thus, if $\theta > d$, we can use the right-tail bound of $\chi^2(d)$ to measure $\mathrm{Pr}(Z_k^t > \theta)$ in the above,
\begin{eqnarray}
\mathrm{Pr}\Big(Z_k^t > \theta\Big) &\leq& \exp\left(-\frac{(\theta - d)^2}{4d + 2(\theta - d)}\right)  \quad \text{when} \quad \theta \ \geq\ d \ . \label{eq:a5}
\end{eqnarray}
On the other hand, leveraging Assumption~\ref{assumption:2} on the conditions of the mean and variance of $Z_i^s$, we can bound its left tail CDF with Bernstein's inequality,
\begin{eqnarray}
\hspace{-14mm}\mathrm{Pr}\Big(Z_i^s < \theta - \kappa\Big) \hspace{-2mm}&\leq&\hspace{-2mm} \mathrm{exp}\left(-\frac{(\mathbb{E}[Z_i^s] - \theta + \kappa)^2}{2\mathbb{V}[Z_i^s] + (2/3)(\mathbb{E}[Z_i^s] -\theta + \kappa)}\right) \ \text{when}\ \theta - \kappa \leq \mathbb{E}\big[Z_i^s\big] \ , \label{eq:a6} 
\end{eqnarray}
where the expectation and variance are over $\boldsymbol{x} \sim D_k^t$.~
Under Assumption~\ref{assumption:2}-2, $\mathbb{E}[Z_i^s] = (1 + \delta) d$. Thus, it is sufficient to choose $\theta - \kappa \leq (1 + \delta)d$ or $\theta \leq (1 + \delta)d + \kappa$ so that Eq.~\eqref{eq:a6} holds.~Plugging Eq.~\eqref{eq:a5} and Eq.~\eqref{eq:a6} into Eq.~\eqref{eq:a4}, we have:
\begin{eqnarray}
\hspace{-7mm}\mathrm{Pr}\Big(\boldsymbol{E}_k^t\Big) \hspace{-2.5mm}&\leq&\hspace{-2.5mm} \exp\left(-\frac{(\theta - d)^2}{4d + 2(\theta - d)}\right) + (n-1)\tau\ \mathrm{exp}\left(-\frac{(\mathbb{E}[Z_i^s] - \theta + \kappa)^2}{2\mathbb{V}[Z_i^s] + (2/3)(\mathbb{E}[Z_i^s] -\theta + \kappa)}\right),\label{eq:a7}  
\end{eqnarray}
when $d \leq \theta \leq (1 + \delta)d + \kappa$.~Now, plugging in the facts that $\mathbb{E}[Z_i^s] = (1 + \delta)d$ and $\mathbb{V}[Z_i^s] \leq \sigma^2d$, we can slightly loosen (increase) the RHS of Eq.~\ref{eq:a7} to arrive at a cleaner bound,
\begin{eqnarray}
\hspace{-6.5mm}\mathrm{Pr}\Big(\boldsymbol{E}_k^t\Big) \hspace{-2.5mm}&\leq&\hspace{-2.5mm} \exp\left(-\frac{(\theta - d)^2}{4d + 2(\theta - d)}\right) + (n-1)\tau\ \exp\left(-\frac{(d(1 + \delta) - \theta + \kappa)^2}{2\sigma^2d + (2/3)(d(1 + \delta) - \theta + \kappa)}\right) \label{eq:a8}   
\end{eqnarray}

We can now choose $\theta = (1 + \delta/2)d$ which satisfies $d \leq \theta \leq (1 + \delta)d + \kappa$ given the premise $\delta \geq \max(0, -2\kappa/d)$ in Eq.~\ref{eq:bound}.~With this choice, Eq.~\ref{eq:a8} can be further simplified as
\begin{eqnarray}
\hspace{-16mm}\mathrm{Pr}\Big(\boldsymbol{E}_k^t\Big) &\leq& \exp\left(-\frac{d\delta^2}{4\delta + 16}\right) \ \ +\ \  (n-1)\tau\ \exp\left(-\frac{(d\delta/2 + \kappa)^2}{2\sigma^2d + (2/3)(d\delta/2 + \kappa)}\right) \ .\label{eq:a9} 
\end{eqnarray}
This completes our proof for Theorem~\ref{thm:1}. \qed

{\bf Remark 1.}~The above bound implies that approximately the error rate decreases exponentially fast in $O(\delta d)$ but it also increases linearly in the number of task and representation clusters $O(n\tau)$, i.e., $\mathrm{Pr}(\boldsymbol{E}_k^t) \lesssim O(n\tau)\exp(-O(\delta  d))$.~This intuitively means that a well-designed algorithm is needed to prevent the separation factor $\delta$ from becoming too small, which would otherwise negate the exponential decay in the error rate. Furthermore, we show that, theoretically, if $\delta$ is sufficiently large relative to the number of tasks and representation clusters, the retrieval error can be made arbitrarily small, despite its linear dependence on the number of tasks.

\MinDelta*


{\bf Proof.}~We will first show that with a sufficiently large $\delta$, it will occur that:
\begin{eqnarray}
\exp\left(-\frac{d\delta^2}{4\delta + 16}\right) &\geq& \exp\left(-\frac{(d\delta/2 + \kappa)^2}{(2\sigma^2 d + (2/3)(d\delta/2 + \kappa))}\right) \ . \label{eq:a10}   
\end{eqnarray}
To find $\delta$ that satisfies Eq.~\ref{eq:a10}, we note that $\exp(-d\delta^2/(4\delta + 16)) > \exp(-d\delta^2/ (4\delta)) = \exp(-d\delta/4)$.~Thus, it suffices to find $\delta$ such that 
\begin{eqnarray}
\exp\left(-\frac{d\delta}{4}\right) &\geq& \exp\left(-\frac{(d\delta/2 + \kappa)^2}{(2\sigma^2 d + (2/3)(d\delta/2 + \kappa))}\right) \ . \label{eq:a11}   
\end{eqnarray}
This is equivalent to finding $\delta$ such that
\begin{eqnarray}
2z^2 \ -\  3z\left(d\sigma^2-\kappa/3\right) \ +\  3\kappa d\sigma^2 &\geq& 0 \ , \label{eq:a12}
 \label{eq:a12}   
\end{eqnarray}
where $z = d\delta/2 + \kappa$.~Solving for $z$ and substituing in $\delta = (2/d)(z - \kappa)$ reveals the condition,
\begin{eqnarray}
\delta &\geq& \frac{2}{d}\left[\frac{1}{4}\left(3d\sigma^2 - \kappa + \big((3d\sigma^2 - \kappa)^2 - 24d\sigma^2\kappa\big)^{\frac{1}{2}}\right) - \kappa\right] \ .\label{eq:a13}   
\end{eqnarray}
Under Eq.~\ref{eq:a13} as well as the condition on $\delta$ in Theorem~\ref{thm:1}, Eq.~\ref{eq:a10} holds, which consequently implies
\begin{eqnarray}
\mathrm{Pr}\Big(\boldsymbol{E}_k^t\Big) &\leq& N\exp\left(-\frac{d\delta^2}{4\delta + 16}\right) \ , \label{eq:a14}    
\end{eqnarray}
where $N = (n - 1)\tau + 1$.~We can continue to solve for $\delta$ such that $N\exp(-d\delta^2 / (4\delta + 16)) \leq \epsilon$, which guarantees $\mathrm{Pr}(\boldsymbol{E}_k^t) \leq \epsilon$.~Solving for $\delta$ now requires solving another quadractic inequality,
\begin{eqnarray}
d\delta^2 \ \ -\ \  4\delta\log\frac{N}{\epsilon} \ \ -\ \  16\log\frac{N}{\epsilon} &\geq& 0 \ .    \label{eq:a15}
\end{eqnarray}
This implies $\delta \geq (2/d)(\log(N/\epsilon) + (\log(N/\epsilon)(\log(N/\epsilon) + 4d))^{1/2})$. Combining this with previous constraints on $\delta$, we obtain the final choice of $\delta$,
\begin{eqnarray}
\hspace{-11mm}\delta &\geq& \frac{2}{d}\max\left\{\frac{1}{4}\left(M + \Big(M^2 - 24d\sigma^2\kappa\Big)^{\frac{1}{2}}\right) - \kappa,\  \log\frac{N}{\epsilon}\left(1 + \left(1 + \frac{4d}{\log\frac{N}{\epsilon}}\right)^{\frac{1}{2}}\right)\right\} \ ,\label{eq:a16}  
\end{eqnarray}
where $M = 3d\sigma^2 - \kappa$.~This will make $\mathrm{Pr}(\boldsymbol{E}_k^t) \leq \epsilon$ for any arbitrarily small value of $\epsilon$ as desired, provided that we have an algorithm design that can learn the representation cluster with the above separation factor in Eq.~\ref{eq:a16}. \qed

\section{Inference Algorithm of PROTEUS}
\label{sec:inference-pseudocode}
\begin{algorithm}[h]
   \caption{\textsc{PROTEUS} Inference}
   \label{alg:testing_algs}
\begin{algorithmic}[1]
    \State {\bfseries INPUT:} pre-trained backbone $h(\boldsymbol{x}; \boldsymbol{\omega}_o)$, test input $\boldsymbol{x}_\ast$, LDA parameters $\boldsymbol{G}$ and $(\boldsymbol{e}(c))_c$, fine-tuning database $\mathrm{KB}$ -- see line 7 of Algorithm~\ref{alg:training_algs}.
    \State {\bfseries OUTPUT:} class label $c_*$
    \State retrieve $(\boldsymbol{\Lambda}_{k_\ast}^{t_\ast}, \boldsymbol{m}_{k_\ast}^{t_\ast})$ for $\boldsymbol{x}_\ast$ -- see lines 200-201 in main text
    \State retrieve $\Delta\boldsymbol{\omega}_S(\boldsymbol{x}_\ast) = \mbox{KB}[(\boldsymbol{\Lambda}_{k_\ast}^{t_\ast}, \boldsymbol{m}_{k_\ast}^{t_\ast})]$ -- see line 7 in Algorithm~\ref{alg:training_algs} 
    
    \State compute prediction $c_\ast$ via Eq.~\ref{eq:23}
    \State {\bfseries return} $c_\ast$
\end{algorithmic}
\end{algorithm}

\section{Complexity Analysis of PROTEUS}\label{sec: complexity}
We analyze the computational complexity of \textsc{PROTEUS} Training and \textsc{PROTEUS} Inference in Algorithm \ref{alg:training_algs} and \ref{alg:testing_algs}. Let $m$ denote the total number of tasks, $T$ denotes the maximum number of Gaussian components and $\mathcal{C}$ be the number of classes.
\subsection{Complexity of \textsc{PROTEUS} Training}

The main loop (line $4$-$13$, Alg.~\ref{alg:training_algs}) is executed once for each $m$ tasks, resulting in a complexity that grows linearly in $m$. We analyze the complexity for each task $k$ below.
\begin{enumerate}
    \item Computing the {\bf value} $\Delta\boldsymbol{\omega}_k$ via solving Eq.~\ref{eq:15_only} in line 5 takes $\mathcal{O}(C(\Delta\boldsymbol{\omega}_k))$, where $C(\Delta\boldsymbol{\omega}_k)$ is the cost of the LoRA fine-tuning.
    \item Next, we compute the $\text{\bf multi-key}= (\boldsymbol{\Lambda}_k^t, \boldsymbol{m}_k^t)_{t=1}^\tau$ for $\tau$ components via Eq.~\ref{eq:18} --line 6. Let $T \geq \tau$ be the maximum number of Gaussian mixture components, the cost of learning this DP-GMM is upper-bounded by $\mathcal{O}(|\mathcal{D}_k|Td^2)$.

    \item The inner loop in line $8$-$12$ runs for $|\mathcal{D}_k|$ iterations. For each iteration, updating $\boldsymbol{G}_S + \boldsymbol{h}_S(\boldsymbol{x})\boldsymbol{h}_S(\boldsymbol{x})^\top$ (line $10$) and $\boldsymbol{e}_S(z) \leftarrow \boldsymbol{e}_S(z) + m^{-1}|D_k|^{-1}\boldsymbol{h}_S(\boldsymbol{x})$(line $11$) takes $\mathcal{O}(d^2)$ and $\mathcal{O}(d)$ time respectively. Thus, the total cost of the inner loop is $\mathcal{O}\left(|D_k| (d^2 + d)\right)$.
\end{enumerate}
Finally, the computational complexity of \textsc{PROTEUS} training is:
\begin{eqnarray}
\hspace{-14.5mm}\mathcal{O}\Bigg(m \Big(C(\Delta\boldsymbol{\omega}_k) + |D_k| T d^2 + |D_k| (d^2 + d) \Big) \Bigg) &=&
    \mathcal{O}\Big( m\Big(C(\Delta\boldsymbol{\omega}_k) +  Td^2|D_k|\Big) \Big) \ . 
\end{eqnarray}

\subsection{Complexity of \textsc{PROTEUS} Inference}
For each test input $\mathbf{x}_\ast$, we analyze the cost of inference in Alg.~\ref{alg:testing_algs} below:
\begin{enumerate}
    \item Retrieving $(\boldsymbol{\Lambda}_{k_\ast}^{t_\ast}, \boldsymbol{m}_{k_\ast}^{t_\ast})$ for $\boldsymbol{x}_\ast$ (line $3$) takes $\mathcal{O}(Tmd^3)$.
    \item Retrieving $\Delta\boldsymbol{\omega}_S(\boldsymbol{x}_\ast) = \mathrm{KB}[(\boldsymbol{\Lambda}_{k_\ast}^{t\ast}, \boldsymbol{m}_{k_\ast}^{t\ast})]$ (line $4$) takes $\mathcal{O}(1)$ time.
    
    \item Computing $c_\ast$ via Eq.~\ref{eq:23} (line $5$) takes $\mathcal{O}(\mathcal{C}(d^3+d^2+d)) = \mathcal{O}(\mathcal{C}d^3)$.
\end{enumerate}
The computational complexity of \textsc{PROTEUS} Inference is therefore $\mathcal{O}(Tmd^3 + \mathcal{C}d^3)$.

{For example, fixing $T=100$ (the largest $T$ in all settings), we report below the per-batch overhead (in seconds) on ImageNet-R as a function of the number of tasks $m$ in Table~\ref{tab:overhead_10T}:
}

\begin{table}[h]
    \caption{{Inference latency report on ImageNet-R per-batch (in seconds) as a function of the number of tasks $m$.}}
    \centering
    \begin{tabular}{|c|c|c|c|c|c|c|c|c|c|}
        \toprule
        Task 1&Task 2&Task 3&Task 4&Task 5&Task 6&Task 7&Task 8&Task 9&Task 10  \\
        \midrule
        0.41&0.77&1.03&1.31&1.66&2.13&2.42&2.60&2.97&3.29 \\
        \bottomrule
    \end{tabular}
    \label{tab:overhead_10T}
\end{table}

{In the context of long sequences of tasks, we further run PROTEUS on CIFAR-100 with $m = 50$ tasks, 2 classes each task. The overhead (in seconds) each batch (batch size = 32) is a function of the no. of tasks $m$ as Table~\ref{tab:overhead_50T} below:
}
\begin{table}[h]
    \caption{{Inference latency report on CIFAR-100 per-batch (in seconds) as a function of the number of tasks $m$.}}
    \centering
    \begin{tabular}{|c|c|c|c|c|c|c|c|c|c|c|}
        \toprule
        Ta. 2&Ta. 5&Ta. 10&Ta. 15&Ta. 20&Ta. 25&Ta. 30&Ta. 35&Ta. 40&Ta. 45&Ta. 50  \\
        \midrule
        0.27&0.64&1.33&2.08&2.93&3.86&4.89&5.91&7.18&8.26&9.59 \\
        \bottomrule
    \end{tabular}
    \label{tab:overhead_50T}
\end{table}

\section{Additional Results on the forgetting metric}\label{sec:forgetting}

Table~\ref{tab: forgetting} shows that \textsc{PROTEUS} achieves the lowest average forgetting rank among all baselines and consistently ranks within the top three best-performing baselines. In the \textit{uniformly mild} and \textit{varying} scenarios, \textsc{PROTEUS} ranks in the top two, with minimal gaps to the top performer (0.01 on CIFAR-100 and 0.36 on VTAB-Sim50, respectively). In the more challenging \textit{uniformly abrupt} scenario, our method ranks within the top three, resulting in the lowest average forgetting rank across all three scenarios. This demonstrates that \textsc{PROTEUS} mitigates catastrophic forgetting more effectively than baselines across diverse task-semantic scenarios.

\begin{table*}[h]
\begin{center}
\hspace{1mm}
\begin{small}
\caption{
\label{tab: forgetting}
Final average forgetting achieved by \textsc{PROTEUS} and other baselines across three semantic gap scenarios with diverse benchmark datasets.}
\vspace{3mm}
\setlength{\tabcolsep}{3pt}
\resizebox{1.0\linewidth}{!}{
\begin{tabular}{|lcc ccc c| c|}
    \toprule
    \textbf{Methods} & \multicolumn{2}{c}{\textbf{Uniformly Mild}} & \multicolumn{3}{c}{\textbf{Uniformly Abrupt}} & \textbf{Varying} & \textbf{Average Rank}\\ 
    \cmidrule(l){2-3} \cmidrule(l){4-6} \cmidrule(l){7-7}
    & CIFAR-100 & ImageNet-R & ImageNet-A & VTAB5T-large & VTAB5T-small & VTAB-Sim50 \\ 
    \midrule
    L2P & $6.22_{\pm0.16} $ (6)& $7.93_{\pm 0.66}$ (6)& $6.29_{\pm0.33}$ (4) & $27.90_{\pm2.14}$ (6) & $12.30 _{\pm3.01}$ (6)& $17.73_{\pm1.69}$ (6) & $5.67$ (6)\\
    DualPrompt & $6.14_{\pm 0.59}$ (5)& $5.39_{\pm0.19}$ (5)& $9.98_{\pm 0.16}$ (6)& $10.55_{\pm2.27}$ (4)& $9.17_{\pm2.57}$ (5)& $8.74_{\pm 0.80}$ (4) & $4.83$ (5)\\
    CODAPrompt & $2.01_{\pm0.72}$ (1)& $1.61_{\pm0.08}$ (2)& $4.73_{\pm 1.17}$ (2)& $25.70_{\pm1.58}$ (5)& $6.55_{\pm1.32}$ (4)& $9.67_{\pm2.09}$ (5) & $3.17$ (4)\\
    SPrompt & $5.33_{\pm0.23}$ (4)& $3.32_{\pm0.08}$ (3)& $3.45_{\pm0.29}$ (1)& $3.99_{\pm1.19 }$ (2)& $6.24_{\pm 0.39}$ (3)& $3.86_{\pm 0.76}$ (3) & $2.67$ (3)\\
    RanPAC & $2.87_{\pm0.11}$ (3)& {$4.55_{\pm 0.55}$} (4)& {$7.02_{\pm 0.95}$} (5)& {$0.31_{\pm 0.02}$} (1)& {$1.85 _{\pm0.19}$} (1)& $1.05 _{\pm0.10}$ (1) & $2.50$ (2)\\
    \midrule
    \textsc{PROTEUS} & ${2.02 _{\pm0.27}}$ (2)& ${1.45_{\pm0.36}}$ (1)& ${5.35_{\pm1.40}}$ (3)& ${6.54_{\pm1.41}}$ (3)& ${4.12_{\pm2.39}}$ (2)& ${1.41_{\pm 0.73}}$ (2) & $\boldsymbol{2.17}$ (1)\\
    \bottomrule
    \end{tabular}
    }
\end{small}
\end{center}
\end{table*}

\section{Datasets, Hyperparameters, and Pre-Trained Models} \label{sec: data and hyper}
\textbf{Forgetting metric.}~Forgetting is defined as $\mathrm{F}_\kappa=(1/(\kappa-1))\sum_{\tau=1}^{\kappa-1}\max_{\tau'\in\{ 1,...,\kappa-1\}}(\mathrm{acc}(\tau,\tau')-\mathrm{acc}(\tau, \kappa))$ where $\mathrm{acc}(\tau,\kappa)$ denotes model's prediction accuracy on task $\tau$ after learning task $\kappa$. 

\textbf{Datasets.}~The details of the datasets used in our experiments are summarized in Table \ref{tab:dataset_stats}. The ImageNet-R, ImageNet-A and VTAB5T-Small datasets are taken from \url{https://github.com/zhoudw-zdw/RevisitingCIL}. CIFAR-100 is accessed directly through torchvision. For VTAB5T-large, we sample 5 datasets from the Visual Task Adaptation Benchmark dataset \citep{zhai2019visual}, including EuroSAT, Oxford IIIT Pets, PatchCamelyon, 102 Category Flower Dataset and Resisc45. We build our VTAB-Sim50 from the VTAB5T-large dataset, following the instructions in \citep{kim2023one}. For each task, we retain 50\% of its classes and sample the remaining classes from other tasks. As a result, we exclude the PatchCamelyon dataset from VTAB-Sim50, as its two-class structure does not allow for a meaningful sampled dataset.

\begin{table*}[h]
    \setlength{\tabcolsep}{12pt}
    \centering
    \caption{Dataset Statistics}
    \begin{tabular}{|llll|}
        \toprule
        \textbf{CL Datasets} & \textbf{\# train samples} & \textbf{\# val samples} & \textbf{\# classes} \\
        \midrule
        CIFAR100 & 50000 & 10000 & 100 \\
        Imagenet-R & 24000 & 6000 & 200 \\
        Imagenet-A & 5981 & 1519 & 200 \\
        VTAB5T-small & 1796 & 8619 & 50 \\
        VTAB5T-large & 321406 & 47585 & 196 \\
        VTAB-Sim50 & 47328 & 11856 & 144 \\
        \bottomrule
    \end{tabular}
    \label{tab:dataset_stats}
\end{table*}


{\bf Pre-Trained Backbone and Configurations.}~For all experiments, we use the ViT-B/16 model~\citep{dosovitskiy2020image} that was pre-trained on ImageNet-21k and fine-tuned on ImageNet-1K. The rank of all LoRA units in PROTEUS is set to $r=4$ for the \textit{Uniformly Mild} and the \textit{Varying} settings, and $r=16$ for the \textit{Uniformly Abrupt} setting. The elastic loss regularization strength $\lambda$ in Eq.~\ref{eq:15_only} is initialized at $0.006$ and is progressively reduced by $20\%$ after each task. 

For the regularization weight $\alpha$ in Eq.~\ref{eq:15_only}, we conduct grid search with $4$ values from $0$ to $1$ with increment $0.1$ and set it to be $0.8$ which leads to the best performance on our validation data. The user-specified noise level $\gamma$ in Eq.~\ref{eq:23} uses the same optimization strategy as in RanPAC. The learning rate varies across datasets, with \(1e^{-3}\) for CIFAR-100 and ImageNet-R, \(5e^{-4}\) for ImageNet-A and VTAB5T-small, and a smaller \(1e^{-4}\) for VTAB5T-large and VTAB-Sim50. The batch size \(b\) is adapted accordingly, with larger values (32) for CIFAR-100 and ImageNet-R, ImageNet-A and VTAB5T-small use 16. VTAB5T-large/VTAB-Sim50 use 24. All experiments are run on 2 NVIDIA A40 GPUs with $48$GB (each) in memory.

\section{Additional ablation studies}\label{sec:orthogonal}

{\bf {Additional experiments with longer task sequences.}}
{For longer task sequences, we further conducted experiments on 19-task VTAB, 50-task CIFAR-100 and 100-task ImageNet-R. Our results in Table~\ref{tab:long_tasks} show that PROTEUS consistently outperforms the baselines, demonstrating strong scalability as the task sequence becomes significantly longer. In addition, PROTEUS achieves \textbf{91.35\%}, \textbf{98.15\%} and \textbf{80.82\%} retrieval accuracy on VTAB, CIFAR100 and ImageNet-R respectively, illustrating that its parameter-free retrieval mechanism remains stable and resistant to forgetting even in long-horizon continual fine-tuning.
}

\begin{table}[h]
    \caption{{Performance comparison between PROTEUS and parameter-adaptation works on longer task sequences: VTAB (19 tasks), CIFAR-100 (50 tasks), and ImageNet-R (100 tasks).}}
    \label{tab:long_tasks}
    \centering
    \begin{tabular}{|c|c|c|c|}
        \toprule
         Methods&VTAB (19 tasks)&CIFAR-100 (50 tasks)&ImageNet-R (100 tasks)  \\
         \midrule
         InfLoRA&84.15& 61.93 &40.6 \\
         SD-LoRA&Not ready& 68.19&55.08 \\
         RanPAC&93.77& 89.23 &72.10 \\
         \midrule
         PROTEUS&\textbf{94.60}& \textbf{91.31} &\textbf{75.69} \\
         \bottomrule
    \end{tabular}
\end{table}

\begin{figure}[h]
    \centering
    \includegraphics[width=\linewidth]{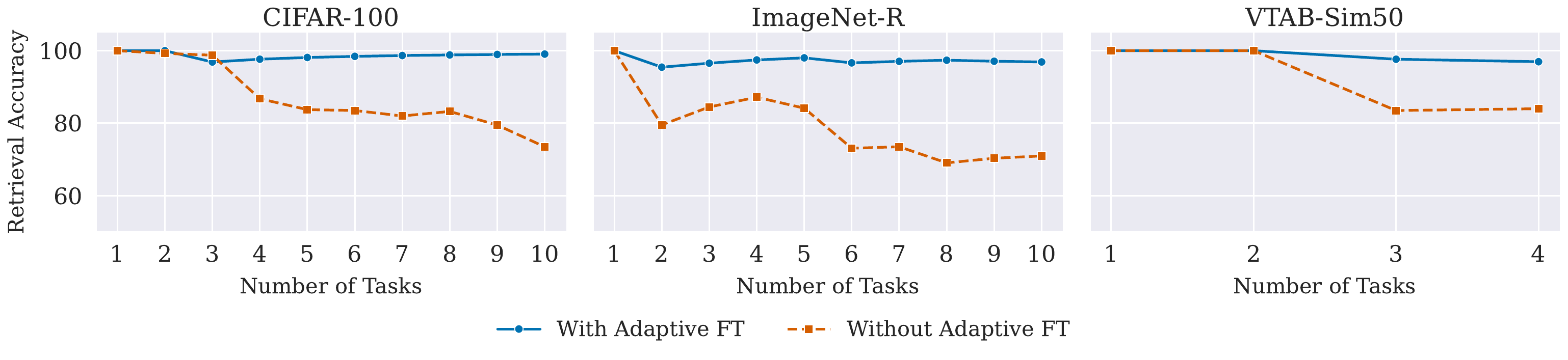}
    \caption{Task retrieval accuracies achieved by \textsc{PROTEUS} and its variant w/o adaptive fine-tuning and enforcing orthogonality among fine-tuning (FT) directions. It can be observed that w/o such conditioning, the task retrieval severely decreases.}
    \label{fig:synergy-pooling-matching}
\end{figure}
{To show the empirical robustness of our theoretical bound on achieving high retrieval accuracy with high confidence (see Theorem 3.5) in long sequences of tasks, we setup a 50-task sequence using the CIFAR-100 dataset and show that empirically in Table~\ref{tab:delta_50T}, the cluster separation delta (see Definition 3.3) remains significantly larger than the theoretical minimum to ensure low retrieval error rate (< 2\%):
}

\begin{table}[htbp]
    \caption{{Report on our empirical cluster separation factor $\delta$ clearly shows that they are greater than minimum requirement to guarantee low retrieval error rate $\epsilon$ across tasks in the Split CIFAR-100 50-task benchmark.}}
    \centering
    \setlength{\tabcolsep}{3pt}
    \begin{tabular}{|c|c|c|c|c|c|c|c|c|c|c|}
        \toprule
        &Ta. 5&Ta. 10&Ta. 15&Ta. 20&Ta. 25&Ta. 30&Ta. 35&Ta. 40&Ta. 45&Ta. 50  \\
        \midrule
        Empirical $\delta$&163.01&105.02&163.31&159.36&140.94&98.93&128.74&88.84&145.01&143.41 \\
        \midrule
        Theoretical $\delta$&5.17&6.60&5.67&5.88&5.71&6.24&4.68&6.31&5.40&6.59 \\
        \bottomrule
    \end{tabular}
    \label{tab:delta_50T}
\end{table}

{\bf Additional Results on the Orthogonality between Old and New LoRA Units.~}This section provides additional empirical verification that the orthogonality constraint in our adaptive fine-tuning loss in Eq.~\ref{eq:15_only} is indeed satisfied in the numerically optimized solution. In particular, we visualize the cosine distance between the corresponding LoRA updates across all tasks in VTAB5T-Large (Fig.~\ref{fig:sim_vtab_large_qv}) and ImageNet-A (Fig.~\ref{fig: similarity_INA_QV_Lora}). In both experiments, we observe that the cosine distances between the corresponding tuning directions $\Delta \boldsymbol{\omega}_\tau^\perp$ of different tasks are vanishingly small, confirming their orthogonality. This observation also remains consistent across various experiments where the adaptive LoRA fine-tuning is applied at different attention layers. The enforced orthogonality in turn helps improve the task retrieval as previously demonstrated in Section~\ref{sec:method}.

{\bf Understanding Sparsity in the Knowledge Transfer Matrix.} 
The norm regularization term in Eq.~\ref{eq:15_only} enforces sparsity, encouraging $\boldsymbol{S}$ to focus on the most relevant tasks and avoid excessive reliance on past ones. Fig.~\ref{fig:heatmap}(a-b) provides heatmap visualizations for the magnitudes of non-zero (diagonal) entries on $\boldsymbol{S}$ on ImageNet-R at task no.\@~$3$ and $9$, showing the selective relevance of past LoRA units to the respective current tasks. Each heatmap has $12$ rows, indexing separate LoRA pools for different attention blocks. The no. of columns indicate the no. of LoRA units per pool. Note that \textsc{Proteus} adds $4$ units per task, which increases the no. of columns when $\tau$ increases. In all heatmaps, we observe that $\boldsymbol{S}$ exhibits a highly sparse structure as expected.

We further conduct an experiment to analyze the impact of this regularizer.~Specifically, we compare the standard setting of \textsc{Proteus} against a variant with $\lambda=0$, which effectively removes the norm penalty on $\boldsymbol{S}$.~Fig.~\ref{fig:heatmap}(c) recreates the heatmap visualization of non-zero terms in $\boldsymbol{S}$ at the final task ($\tau=9$), illustrating a much denser usage of previous LoRA units compared to the version with sparsity enforced via Eq.~\ref{eq:15_only} (shown in Fig.~\ref{fig:heatmap}).~The corresponding performance is shown in the second last row of Table~\ref{tab:ablationS}, severely degraded to $30.49\%$ after training.~This degradation can be attributed to the excessive and unfiltered reuse of prior LoRA units, including those that negatively affect performance, making the model harder to optimize and reducing its effectiveness.

\begin{table}[!t]
    \centering
    \makebox[\textwidth][c]{
        \begin{minipage}{0.44\textwidth}
        \setlength{\tabcolsep}{5pt}
            \caption{\textsc{Proteus}'s classification and retrieval accuracy achieved with various settings of $\boldsymbol{S}$ and $\lambda$ on ImageNet-R dataset.}
            \begin{tabular}{|lcc|}
                \toprule
                \textbf{Configs} & \textbf{Classification} & \textbf{Retrieval} \\
                  \midrule
                  $\boldsymbol{S}=\mathbf{0}$ & $78.98$ & $87.05$ \\
                  $\boldsymbol{S}=\boldsymbol{I}$ & $30.38$ & $41.45$ \\
                  $\boldsymbol{S}_\ast, \lambda=0$ & $30.49$ & $41.97$\\
                  $\boldsymbol{S}_\ast, \lambda>0$ & $82.17$ & $96.03$  \\
                  \bottomrule
            \end{tabular}
            
            \label{tab:ablationS}
        \end{minipage}
        \hspace{10mm}
        \begin{minipage}{0.45\textwidth}
        \setlength{\tabcolsep}{5pt}
            \caption{\textsc{Proteus}'s classification accuracy achieved with Top-K Gaussians retrieval where $K=1, 3, 5$ on two VTAB datasets.} 
            \begin{tabular}{|lcc|}
                \toprule
                $\textbf{K}$ & \textbf{VTAB5T-Small} & \textbf{VTAB5T-Large} \\
                \midrule
                $1$ & \textbf{94.76} & \textbf{89.32} \\ [1.95mm]
                 $3$ & $93.72$ & $76.50$ \\ [1.95mm] 
                 $5$ & $92.34$ & $76.25$ \\ 
                \bottomrule
            \end{tabular}
            \label{tab:topkG}
        \end{minipage}
    }
    \vspace{-3mm}
\end{table}

{\bf Comparing to Top-K Gaussian signature function.}
Since selecting the closest Gaussian component yields strong performance, we perform an ablation study on the use of Top-K Gaussians instead. Given a new input $x$, we compute the likelihood of $h(x)$ under all Gaussian components for each task and aggregate the Top-K likelihoods by summing them, producing a task-specific signal. The task with the largest signal is then inferred as the task identity. We show the results for $K=1,3,5$ on two VTAB5T datasets in Table~\ref{tab:topkG}, which confirms that the accuracy at $K = 1$ is the best.

\begin{figure}[H]
    \begin{subfigure}[b]{0.32\textwidth}
        \includegraphics[width=\linewidth]{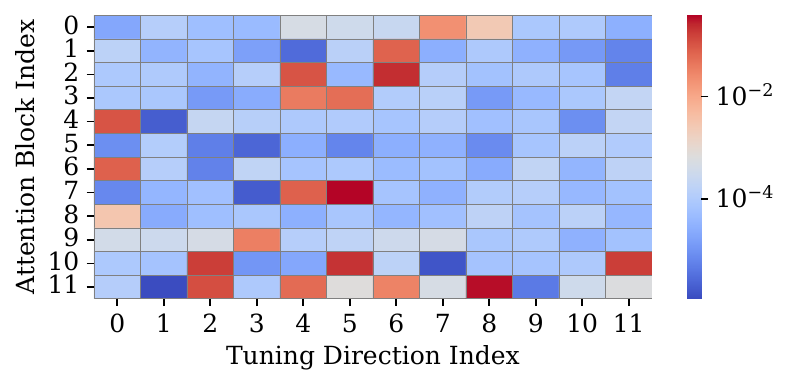}
        \caption{At $\tau=3$}
        \label{fig:a}
    \end{subfigure}
    \hfill
    \begin{subfigure}[b]{0.32\textwidth}
        \includegraphics[width=\linewidth]{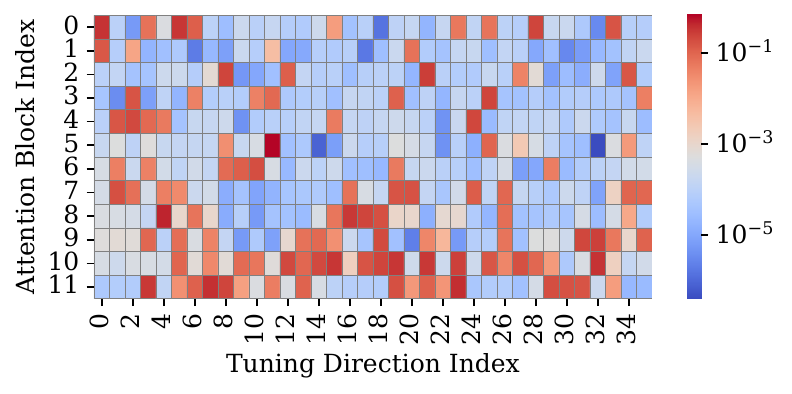}
        \caption{At $\tau=9$}
        \label{fig:b}
    \end{subfigure}
    \hfill
    \begin{subfigure}[b]{0.32\textwidth}
        \includegraphics[width=\linewidth]{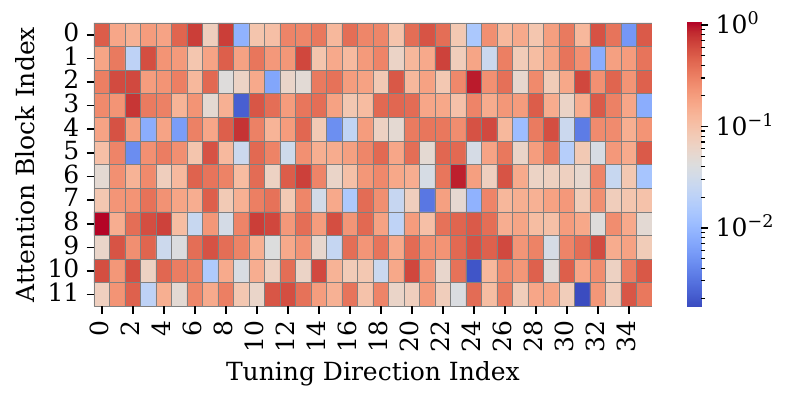}
        \caption{At $\tau=9$ and $\lambda=0$}
        \label{fig:c}
    \end{subfigure}
    \caption{Heatmap visualizations of $\boldsymbol{S}$ at different iterations on ImageNet-R with $\lambda > 0$ and $\lambda=0$.}
    \label{fig:heatmap}
\end{figure}


\begin{figure}[ht]
    \centering
    \includegraphics[width=0.8\linewidth]{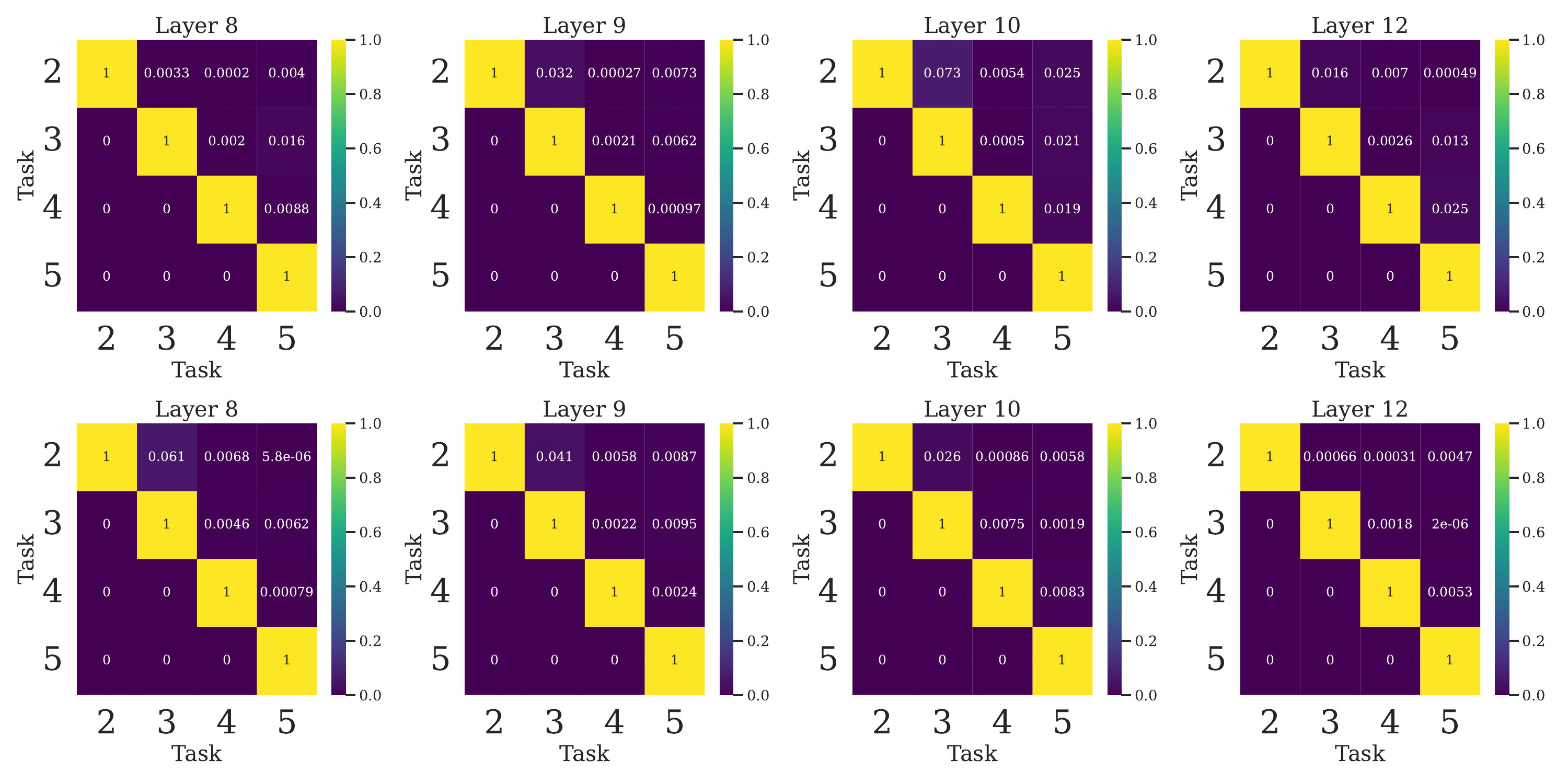}
    \caption{Pairwise cosine similarities of LoRA fine-tuning directions $\Delta \boldsymbol{\omega}_\tau^\perp$ across tasks $\tau$ of VTAB5T-Large for the query (top) and value (bottom) matrices of the multi-head self-attention (MHSA) at various layers. For clearer visualization, we display only the upper half of the cosine similarity matrices since they are symmetric.}
    \label{fig:sim_vtab_large_qv}
\end{figure}

\begin{figure*}
    \centering
    \includegraphics[width=\linewidth]{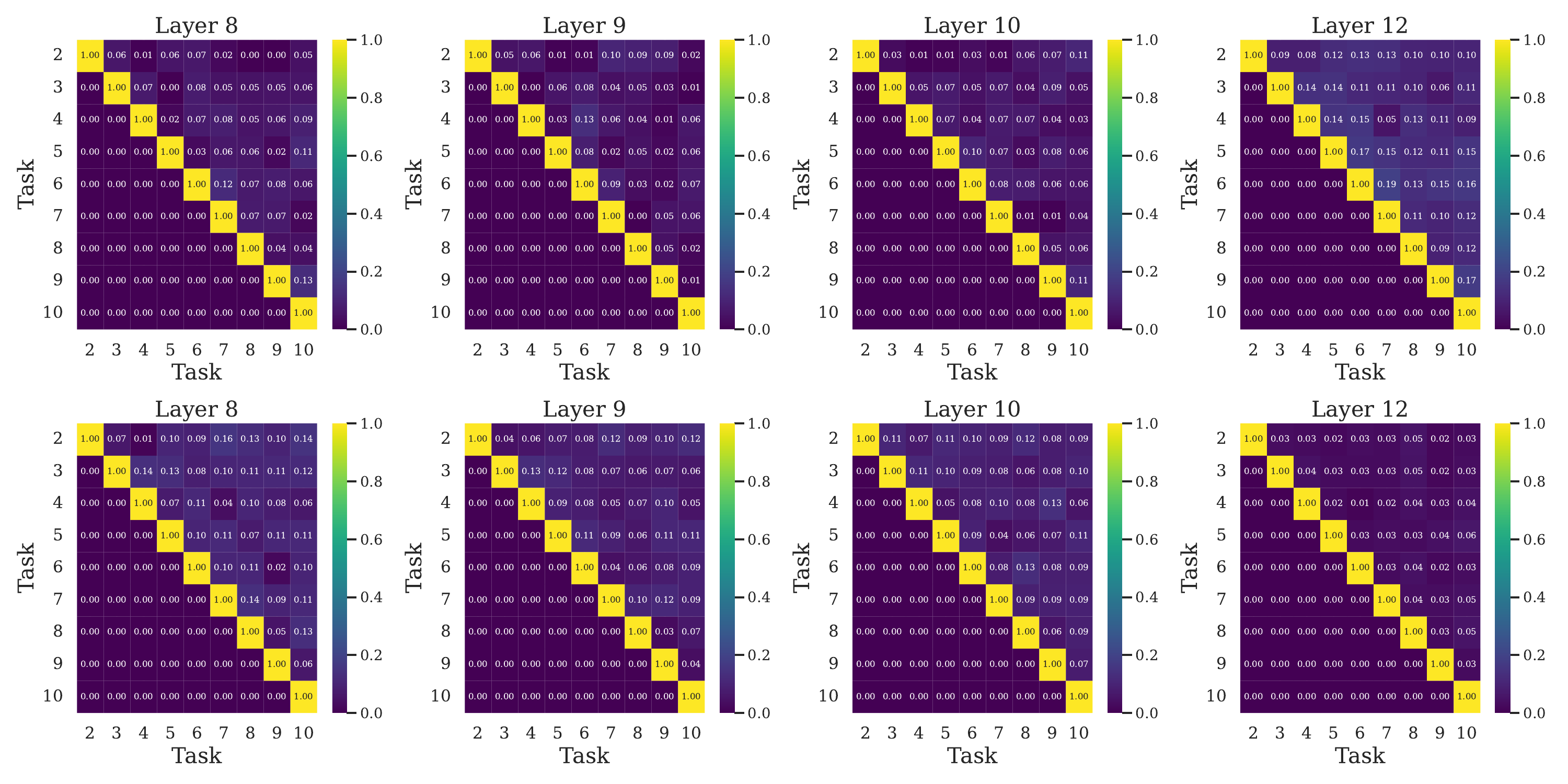}
    \caption{Pairwise cosine similarities of LoRA fine-tuning directions $\Delta \boldsymbol{\omega}_\tau^\perp$ across tasks $\tau$ of ImageNet-A for the query (top) and value (bottom) matrices of the multi-head self-attention (MHSA) at various layers.  For clearer visualization, we display only the upper half of the cosine similarity matrices since they are symmetric.}
    \label{fig: similarity_INA_QV_Lora}
\end{figure*}

{\bf {Advantage of DP-GMM over K-Means.}}
{Conceptually, DP-GMM offers two key advantages over K-means clustering. First, it automatically adapts the number of clusters, which is crucial in continual learning where tasks vary widely in complexity and semantics. Second, DP-GMM’s probabilistic modeling allows PROTEUS to compute principled likelihood scores for test inputs, and leverage those for retrieval. To empirically demonstrate the importance of the DP-GMM module, we provide  an additional experiment  in which DP-GMM is replaced  with K-Means (using $K=20$) in our pipeline. This simple change  resulted in up to \textbf{13/18\% drop} in classification/retrieval accuracy (see the Table~\ref{table:kmeans} below). }

\begin{table}[htbp]
\centering
\caption{{Performane Comparison of DP-GMM and K-Means on ImageNet-R and ImageNet-A}}
\label{table:kmeans}

\begin{tblr}{
  width     = \linewidth,
  colspec   = {l *{4}{c}},
  column{1} = {leftsep=2pt},
  cell{1}{1} = {r=2}{c},
  cell{1}{2} = {c=2}{c},
  cell{1}{4} = {c=2}{c},
  vlines,
  hlines,
  rows      = {m},
}
  Datasets           & DP-GMM          &                 & K-Means         &                 \\
                     & Classifier      & Retrieval       & Classifier      & Retrieval       \\
  ImageNet-R   & \textbf{82.69}  & \textbf{96.89}  & 81.15           & 92.75           \\
  ImageNet-A   & \textbf{63.77}  & \textbf{77.55}  & 55.34           & 63.63           \\
\end{tblr}

\end{table}

\section{GPU Footprint During Training} \label{sec: memory}
Fig.~\ref{fig:complexity} compares PROTEUS's GPU memory overhead during training to those of other baseline methods on ImageNet-A. Since PEFT-based CL methods expand their memory to accomodate increasing tasks, we report the maximum memory incurred during training of each method. The results show that PROTEUS requires less GPU memory than the competing approaches while delivering superior performance, as highlighted in Table \ref{tab: main-table}. Notably, PROTEUS's memory consumption is significantly lower than RanPAC's while achieving better accuracy than RanPAC.

\begin{figure}[t]
     \centering
    \includegraphics[width=0.5\linewidth]{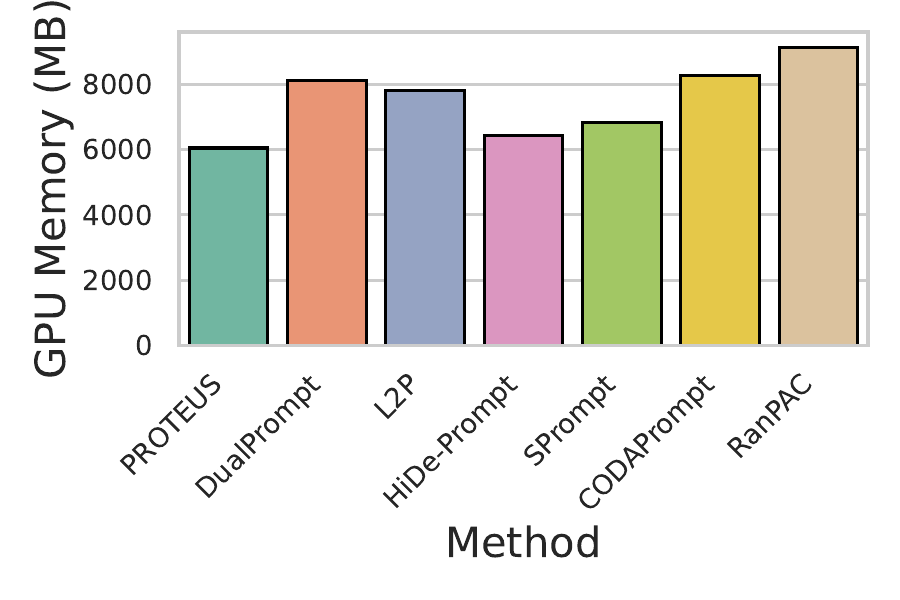}\vspace{-5mm}
    \caption{GPU consumption (MB) of various baselines.~\textsc{Proteus} consumes least GPU.\vspace{-5mm}}
    \label{fig:complexity}
\end{figure}

{\section{Detailed Analysis on Memory Cost.}}

{To further provide analysis about total memory cost induced during training process, we first provide an example on common settings. Choosing a low LoRA rank $r = 4$ per task, i.e., 4 rank-1 LoRA units. As the embedding dimension of each transformer block in ViT is 768 and there are 12 such blocks, each rank-1 LoRA unit comprises 24 vectors of size 768 x 1. In addition, the sizes of the mean vector and covariance matrix of each resulting input embedding cluster under each rank-1 LoRA unit are 768 x 1 and 768 x 768 respectively. As our non-parametric clustering algorithm is configured with at most 20 such clusters, each task incurs at most 24 * 4 * 768 * 4 (bytes) = 288KB and $(768 + 768 * 768) * 4 * 20 < 46$MB to store 4 rank-1 LoRA units and the (mean, covariance) signature for each resulting input embedding cluster, respectively. This amounts to 48MB of overhead per task which is quite affordable given the capacity of modern computers. Hence, a large sequence of 100 tasks will incur no more than 4.8GB of overhead. }

{Alternatively, we can also enhance the current approach via using a schedule of exponentially decaying LoRA rank. Here, we assume that with an increasing number of observed tasks, the incremental information provided by each newly arriving task diminishes. Hence, the LoRA rank for latter tasks can be reduced accordingly. This ensures the total parameter cost remains strictly bounded. In particular, let $r_m$ denote the LoRA rank at task $m$, we choose an exponential decay strategy such as: $r_m = r_o\exp{(-\alpha(m-1)}) $ so the sequence $r_1, …, r_m$ forms a geometric series or $\sum_{m=1}^\infty r_m = \frac{r_0}{1 - \exp{-\alpha}} < \infty$. Thus, even as $M \rightarrow \infty$, the total LoRA memory remains finite.}

{Let $\tau$ be the number of clusters of each task , we introduce $\tau*M$ clusters after observing M tasks.
Let c be the memory cost per rank-1 LoRA unit, and v be the memory cost per cluster. Following the above calculations:
\begin{itemize}
    \item c = 24 * 768 * 4 * 4 (bytes) = 288KB = 0.288MB
    \item v = $(768 + 768 * 768 ) * 4 (bytes) < 2.4$ MB
\end{itemize}
The total memory cost is then upper-bounded via:
$\sum_{m=1}^\infty c*r_m + v*\tau*M \leq H$.}

{Choosing $r_0 = 10, \alpha = 0.05, \tau = 20, M = 20$, we have H = 1019.05MB $<$ 1GB. Using this budget-constrained configuration on a 20-task simulation of the CIFAR100 dataset, we obtained promising results as reported below. Despite the tight budget, the performance of the budget-constrained PROTEUS matches that of its unconstrained version (fixed r = 10) as Table~\ref{tab:budget_cifar100}:}

\begin{table}[htbp]
    \caption{{Performance comparison on CIFAR100 20-task setting between PROTEUS under budget-constrained strategy and original unconstrained version.}}
    \centering
    \setlength{\tabcolsep}{3pt}
    \begin{tabular}{|c|c|c|c|c|c|c|c|c|c|c|}
        \toprule
        Methods&Ta. 2&Ta. 4&Ta. 6&Ta. 8&Ta. 10&Ta. 12&Ta. 14&Ta. 16&Ta. 18&Ta. 20  \\
        \midrule
       Budget-Constrained&99.1&98.6&97.9&97.57&96.82&96.46&96.22&96.1&96.21&95.43 \\
        \midrule
        Unconstrained& 99.1&98.4&97.7& 97.17&96.74&96.50&96.54&96.51&96.60&95.65 \\
        \bottomrule
    \end{tabular}
    \label{tab:budget_cifar100}
\end{table}

{In a more challenging setting on ImageNet-R with $r_0 = 10, \alpha = 0.1, \tau = 50$ which enforces a maximum memory budget H $<$ 2.5GB, the budget-constrained PROTEUS still manages to retain strong performance compared to its unconstrained version (fixed r = 10) as reported in Table~\ref{tab:budget_imgnr} below.}
\begin{table}[htbp]
    \caption{{Performance comparison on ImageNet-R 20-task setting between PROTEUS under budget-constrained strategy and original unconstrained version.}}
    \centering
    \setlength{\tabcolsep}{3pt}
    \begin{tabular}{|c|c|c|c|c|c|c|c|c|c|c|}
        \toprule
        Methods&Ta. 2&Ta. 4&Ta. 6&Ta. 8&Ta. 10&Ta. 12&Ta. 14&Ta. 16&Ta. 18&Ta. 20  \\
        \midrule
       Budget-Constrained&87.55&86.27&82.43&82.57&82.85&82.86&81.31&79.53&80.04&79.13 \\
        \midrule
        Unconstrained& 90.28&90.03&89.36&88.39&87.53&86.92&87.20&86.46&86.74&86.65 \\
        \bottomrule
    \end{tabular}
    \label{tab:budget_imgnr}
\end{table}

{These results consistently demonstrate that our exponential-decay LoRA strategy can rigorously bound memory growth while preserving the optimal performance of the unconstrained version (with mild degradation). This confirms that PROTEUS remains practical, affordable (in memory cost), and scalable in long-term continual learning scenarios.}

\section{Related Works}
\label{related}

{\bf Continual Learning.}~CL aims to learn new tasks effectively without sacrificing the knowledge learned in previous tasks.~Conventional CL methods fall into 3 main categories: rehearsal-based, regularization-based and architecture-based. Rehearsal-based CL methods ~\citep{rolnick2019experience, shin2017continual, bang2021rainbow, buzzega2020dark, aljundi2019gradient, aljundi2019online, hayes2019memory, hou2019learning, lopez2017gradient, von2019continual, van2020brain} utilize memory buffers to store past examples of previous tasks in order to reduce forgetting, while regularization based methods ~\citep{titsias2019functional, pan2020continual, chaudhry2018riemannian, kirkpatrick2017overcoming} add regularization terms to constrain the weight updates such that they do not interfere significantly with the knowledge learned from previous tasks. 

On the other hand, architecture-based methods ~\citep{mallya2018piggyback, ebrahimi2020adversarial, lee2020neural} introduce new parameters for each new task, allowing the model to specialize for new tasks without disrupting the learned representations of previous ones. However, traditional methods for CL are not suitable for adapting large pretrained models, which is the focus of this paper, due to their need to fully fine-tune the pre-trained models.

{\bf Parameter Efficient Continual Learning.}~With the advent of foundation models, various parameter efficient finetuning methods have been applied to efficiently adapt foundation models to new unseen task with a small set of learnable parameters in CL settings. Previous works on prompt based CL methods have focused on how to further design and retrieve prompts to reduce catastrophic forgetting. Particularly, L2P ~\citep{wang2022learning}, DualPrompt ~\citep{wang2022dualprompt} and CODAPrompt ~\citep{Smith_2023_CVPR} leverage prompt tuning for ViT in CL setting. Based on these works, HiDe-Prompt ~\citep{wang2023hide} enhances the performance by storing sample features alongside with learning the prompts. 

On the other hand, LoRA-based methods such as InfLoRA ~\citep{liang2024inflora}, Online-LoRA ~\citep{wei2024online}, SD-LoRA ~\citep{wu2025sd} leverage LoRA units to adapt to new tasks in CL. InfLoRA and SD-LoRA apply the final accumulated LoRA weight to all tasks, hence they do not have any retrieval components, comprimising task-specific performance.~More recently, O-LoRA \citep{wang2023orthogonal} also proposes a different approach to continually learn new LoRA units which are orthogonal to previous units for each new task.~However, unlike our approach, O-LoRA uses all LoRA units during inference rather than carefully retrieving the most relevant units for each new input.~This leads to a lack of representation adaptability which is similar to RanPAC~\citep{DBLP:conf/nips/McDonnellGPAH23} and consequently results in worse performance than our method -- see Table~\ref{tab:lora_position}.

{Moreover, we note that none of the existing parameter-adaptation methods for continual fine-tuning adopts retrieval, so adding existing parametric retrieval on the other hand will not work out of straightforward integration. To demonstrate this, we conducted an additional experiment on ImageNet-R, combining RANPAC~\cite{DBLP:conf/nips/McDonnellGPAH23} (which is best among state-of-the-art parameter-adaptation methods) with the parametric retrieval mechanism of HiDE~\citep{wang2023hide} to retrieve the most relevant LoRA unit for each input during test time. Our new results in Table~\ref{tab:retrieve_significant} show that this vanilla integration of parametric retrieval expectedly degrades the performance of RANPAC due to forgetting (as evidenced by weak retrieval accuracy). On the other hand, PROTEUS with guaranteed parameter-free retrieval (see Theorem 3.5 and Fig. 2) achieves 96\% of retrieval accuracy and consequently, significantly better performance (see table below).
}

\begin{table}[h]
    \centering
    \setlength{\tabcolsep}{12pt}
    \vspace{0.5cm}
    \caption{{Performance comparison between PROTEUS and two versions of RanPAC: the original paper (RanPAC (wo/ Retrieval)) and its variant using the parametric retrieval mechanism of HiDE~\citep{wang2023hide} to retrieve the most relevant LoRA unit for each input during test time (RanPAC (w/ Retrieval))}}
    \begin{tabular}{|lcc|}
        \toprule
        \textbf{Method }& \textbf{Performance} & \textbf{Retrieval} \\
        \midrule
         RanPAC (w/ Retrieval) & 71.79 & 75.18 \\
         RanPAC (wo/ Retrieval)  & \underline{78.08} & N/A \\
         \midrule
         PROTEUS  & \textbf{82.17} & \textbf{96.02}\\
         \bottomrule
    \end{tabular}
    
    \label{tab:retrieve_significant}
\end{table}

Alternatively, another recent method, MELO~\citep{yu2024melo}, focuses instead on a different scheme of parameter-free retrieval approach based on clustering. However, it relies on carefully tuned clustering hyperparameters which fluctuates wildly across different pre-trained backbones. Furthermore, MELO was developed specifically for NLP applications using T5/BERT/GPT2 backbone, leaving it uncertain how to adapt its hyperparameters for vision tasks with ViT backbone. It also lacks mechanisms to mitigate knowledge interference (e.g., orthogonality constraints). In contrast, PROTEUS mitigates knowledge interference via both adaptive knowledge selection and orthogonal constraints, which results in significant performance improvement over both O-LoRA and MELO -- see Table \ref{tab:lora_position}. 

We evaluate the performance of O-LoRA and MELO on vision tasks by adapting their designs to the PROTEUS framework for a fair comparison.~For O-LoRA, we remove the retrieval component and disable knowledge reuse by setting $\boldsymbol{S} = 0$ while replacing our regularization with O-LoRA's. At inference time, all LoRA units are used rather than being retrieved selectively for each input. In contrast, for MELO, we replace its retrieval mechanism with our own and remove both the orthogonality constraints and the knowledge reuse component during training (setting $\boldsymbol{S} = 0$), thus reproducing MELO's behavior for a fair evaluation.~The reported results in Table~\ref{tab:lora_position} show that PROTEUS achieves much better performance than both O-LoRA and MELO across multiple benchmarks. 
\begin{table}[t]
    \centering
    \setlength{\tabcolsep}{12pt}
    \vspace{0.5cm}
    \caption{Performance comparison between our proposed method PROTEUS and its variants incorporating and adapting the continual fine-tuning approaches in O-LoRA~\citep{wang2023orthogonal} and MELO~\citep{yu2024melo}.~We note that these approaches were developed for NLP settings with T5/BERT/GPT2 backbone and need to be adapted to our CV settings with the ViT backbone.}
    \begin{tabular}{|lccr|}
        \toprule
        \textbf{CL Datasets }& \textbf{PROTEUS} & \textbf{O-LoRA} & \textbf{MELO}  \\
        \midrule
         CIFAR-100 & \textbf{94.10} & 54.55 & 92.98 \\
         ImageNet-R  & \textbf{82.17} & 53.93 & 78.98 \\
         \midrule
         ImageNet-A  & \textbf{62.76} & 32.11 & 38.47 \\
         VTAB5T-small  & \textbf{92.34} & 87.69 & 87.67 \\
         \midrule
         VTAB-Sim50  & \textbf{95.89} & 90.64 & 91.07 \\
         \bottomrule
    \end{tabular}
    
    \label{tab:lora_position}
\end{table}

{\bf Parameter Efficient Fine-tuning (PEFT).} Full fine-tuning of modern large language models incurs a substantial amount of computational and storage resources and runs the risk of overfitting. To mitigate these, PEFT methods have been proposed, which attain or even outperform the performance of traditional fine-tuning methods while learning only a small number of parameters. Notably, prompt-tuning \citep{qin2021learning, liu2021p} and prefix-tuning \citep{li2021prefix} prepend learnable parameters to the input of the transformer. 

Likewise, adapters introduces learnable modules to the layers of transformers for each new task. On the other hand, low-rank adaptation (LoRA) \citep{hu2021lora} decomposes the weight update matrix into lower-ranked matrices and fine-tunes these low rank components. PEFT methods have also been proposed for vision tasks, such as visual prompt tuning \citep{jia2022visual}, which achieves comparable or better performance than full fine-tuning. However, PEFT methods are usually applied in static data settings while real-world scenarios require models to continuously adapt to new changes, necessitating approaches that allows PEFT to be applied dynamically over time.

\section{Task Retrieval Overhead}\label{retrieval_overhead}
In our largest experiment on ImageNet-R with the maximum number of Gaussian components $K=100$, the average inference cost is on average $2.8$ seconds per batch (with batch size = $32$).  To quantify the trade-off between predictive performance and inference efficiency, we introduce the metric \textit{Retrieval Accuracy Gain} (\%/sec): 
\begin{eqnarray}
\hspace{-7mm}\mbox{Retrieval Accuracy Gain} \hspace{-1mm}&=&\hspace{-1mm} \frac{\mbox{Retrieval Accuracy}_{\mbox{PROTEUS}}-\mbox{Retrieval Accuracy}_{\mbox{Baselines}}}{\mbox{Inference Time}_{\mbox{PROTEUS}}-\mbox{Inference Time}_{\mbox{Baselines}}} \ .
\end{eqnarray}
This metric quantifies the retrieval accuracy improvement per second of inference time, highlighting the trade-off between retrieval effectiveness and computational cost. A higher value indicates that PROTEUS achieves greater retrieval accuracy gains for each second spent during inference. Table~\ref{tab:retrieval_overhead} presents the Retrieval Accuracy Gain of PROTEUS compared to other retrieval-based baselines on ImageNet-R. Notably, PROTEUS achieves the highest improvement, with a gain of 22.29\% retrieval accuracy per second over Dual-Prompt. 

This result demonstrates the efficiency of PROTEUS's retrieval mechanism, offering a compelling balance between performance and inference overhead in retrieval-based continual learning.~By leveraging task-specific representations and parameter-free retrieval, PROTEUS achieves significant gains in accuracy due to a thorough mitigation of forgetting during test time, as demonstrated in our experiments.~It is also worth noting that the retrieval process across different tasks can be parallelized, which helps reduce the inference-time overhead in practice.~We believe that further optimizing this trade-off presents an exciting direction for future work.
\begin{table}[h]
    \centering
    \setlength{\tabcolsep}{6pt}
    \caption{Retrieval Accuracy Gain of PROTEUS over retrieval-based methods on ImageNet-R.}
    \begin{tabular}{|lccr|}
        \toprule
         \textbf{CL Methods} &  Dual-Prompt & SPrompt & HiDE \\
         \midrule
         \textbf{Retrieval Accuracy Gain (\% / sec)}  & $22.29$ & $20.78$ & $19.07$\\
         \bottomrule
    \end{tabular}
    \label{tab:retrieval_overhead}
    \vspace{-5mm}
\end{table}

\section{Parameter Comparison}
Table~\ref{tab:params} reports the total number of parameters introduced during training on VTAB-Sim50 by each baseline, using the optimal configuration provided in their released code.
\begin{table}[h]
    \centering
    \setlength{\tabcolsep}{12pt}
    \vspace{0.5cm}
    \caption{No. of parameters in the optimal configuration of each baseline methods (reported in their papers and/or released code) and their corresponding performance on the VTAB-Sim50 dataset.}
    \begin{tabular}{|llr|}
         \toprule
         \textbf{CL Methods} & \textbf{\#Parameters} & \textbf{Performance} \\
         \midrule
         L2P & $156,868$ & $80.24_{\pm 1.46}$ \\
         Dual-Prompt & $602,308$ & $87.71_{\pm0.33}$\\
         SPrompt & $1,650,628$ & $90.69_{\pm0.22}$ \\
         CoDAPrompt & $3,950,786$ & $86.01_{\pm1.91}$ \\
         AdaPromptCL & $18,773,186$ & $77.86_{\pm1.62}$ \\
         \midrule
         RanPAC & $213,505$ & $90.22_{\pm0.24}$ \\
         InfLoRA & $ 251,992$ & $91.25_{\pm0.25}$\\
         SD-LoRA & $464,786 $ & $84.91_{\pm1.95}$ \\
         \bottomrule
    \end{tabular}
    \label{tab:params}
\end{table}

The number of parameters in the knowledge transfer matrix $\boldsymbol{S}$ of PROTEUS is given by no. LoRA-attention matrices $\times$ no. blocks applied $\times$ no. current LoRA units. In our setup, LoRA is applied to query and value matrices across all 12 blocks. The no. of current LoRA units is given by $r$ $\times$ no. tasks seen, where $r$ is the LoRA rank.

Overall, the number of parameters in $\boldsymbol{S}$ remains relatively small throughout training. For example, in the optimal configuration on VTAB-Sim50, we achieve superior performance over all baselines with just \textbf{258,192} parameters, making PROTEUS more compact than most previous methods. On the same task, a lightweight variant of PROTEUS, which uses LoRA on the first $8$ Transformer blocks with only \textbf{209,092} parameters, achieves \textbf{95.44}\% accuracy, surpassing all baselines while using fewer parameters than most (except L2P). To enable a direct comparison with L2P, which has the fewest parameters among baselines, we further reduce the number of parameters of PROTEUS to \textbf{159,940} by applying LoRA to just the first 4 Transformer blocks.~Even under this constraint, PROTEUS achieves \textbf{88.63}\% accuracy, outperforming L2P at equivalent parameter complexity.
\section{Ablation on regularization parameter $\alpha$ in Eq.~\ref{eq:15_only}}\label{sec: alpha-lambda}

\begin{figure}[ht]
    \centering
    \includegraphics[width=0.6\linewidth]{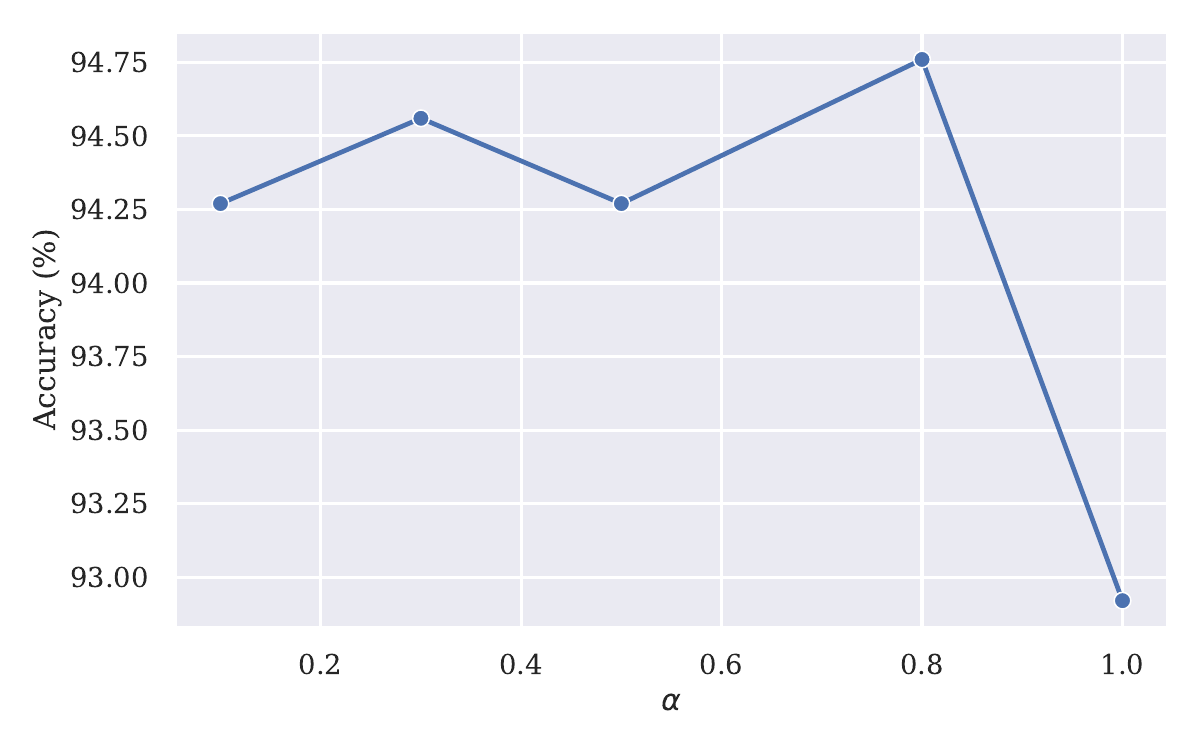}
    \caption{Performance of PROTEUS as $\alpha$ varies from 0.1 to 1.0.}
    \label{fig:alpha ablation VTAB}
\end{figure}

\par In this section, we investigate the sensitivity of PROTEUS's performance to the regularization weight $\alpha$.~As defined in Eq.\eqref{eq:15_only}, $\alpha$ controls the trade-off between the $\ell_1$ and $\ell_2$ norm penalties where larger values of $\alpha$ emphasize sparsity while smaller values encourage stability and smoothness in the selection matrix $\boldsymbol{S}$. To ablate the impact of this trade-off, we fix all other hyperparameters as specified in Appendix~\ref{sec: data and hyper} and vary $\alpha$ across the set $\{0.1, 0.3, 0.5, 0.8, 1.0\}$. We conduct this ablation study on the VTAB5T-small benchmark.

As shown in Fig.~\ref{fig:alpha ablation VTAB}, the best performance is achieved at $\alpha = 0.8$, reaching a peak accuracy of \textbf{94.76}\%. In contrast, setting $\alpha = 1.0$, which corresponds to using only the $\ell_1$ norm, leads to a drop in accuracy.~This underscores the importance of incorporating the $\ell_2$ term to preserve stability and smoothness in $\boldsymbol{S}$. The results demonstrate that a balanced combination of sparsity and stability ($0 < \alpha < 1$), such as elastic loss, leading to more effective and generalizable representations.

\color{black}
\section{Limitations}\label{limit}
Despite the performance advantage of PROTEUS in class-incremental settings, we foresee that the retrieval-based task signature construction will face some challenges when task boundaries are ambiguous or data distributions overlap in more complex settings such as task-free or online continual learning. Extending our framework to these settings will be part of our future work.

\end{document}